\tikzstyle{morphism}=[fill=white, draw=black, shape=rectangle]
\tikzstyle{medium box}=[fill=white, draw=black, shape=rectangle, minimum width=1.3cm, minimum height=0.7cm]
\tikzstyle{large morphism}=[fill=white, draw=black, shape=rectangle, minimum width=1.7cm]
\tikzstyle{bn}=[fill=black, draw=black, shape=circle, inner sep=1.5pt]
\tikzstyle{state}=[fill=white, draw=black, regular polygon, regular polygon sides=3, minimum width=0.8cm, shape border rotate=180, inner sep=0pt]
\tikzstyle{medium state}=[fill=white, draw=black, regular polygon, regular polygon sides=3, minimum width=1.3cm, inner sep=0pt, shape border rotate=180]
\tikzstyle{large state}=[fill=white, draw=black, regular polygon, regular polygon sides=3, minimum width=2.2cm, shape border rotate=180, inner sep=0pt]
\tikzstyle{wide state}=[fill=white, draw=black, shape=isosceles triangle, minimum width=0.8cm, shape border rotate=270, inner sep=1.4pt, minimum height=0.5cm, isosceles triangle apex angle=80]
\tikzstyle{wn}=[fill=white, draw=black, shape=circle, inner sep=1.5pt]
\tikzstyle{blue morphism}=[fill=white, draw={rgb,255: red,15; green,0; blue,150}, shape=rectangle, text={rgb,255: red,15; green,0; blue,150}, tikzit category=blue]
\tikzstyle{blue state}=[fill=white, draw={rgb,255: red,15; green,0; blue,150}, shape=circle, regular polygon, regular polygon sides=3, minimum width=0.8cm, shape border rotate=180, inner sep=0pt, text={rgb,255: red,15; green,0; blue,150}, tikzit category=blue]
\tikzstyle{blue node}=[fill={rgb,255: red,15; green,0; blue,150}, draw={rgb,255: red,15; green,0; blue,150}, shape=circle, tikzit category=blue, inner sep=1.5pt]
\tikzstyle{blue}=[text={rgb,255: red,15; green,0; blue,150}, tikzit draw={rgb,255: red,191; green,191; blue,191}, tikzit category=blue, tikzit fill=white, inner sep=0mm]
\tikzstyle{blue wide state}=[fill=white, draw={rgb,255: red,15; green,0; blue,150}, text={rgb,255: red,15; green,0; blue,150}, shape=isosceles triangle, minimum width=0.8cm, shape border rotate=270, inner sep=1.4pt, minimum height=0.5cm, isosceles triangle apex angle=80]
\tikzstyle{red node}=[fill={rgb,255: red,150; green,0; blue,2}, draw={rgb,255: red,150; green,0; blue,2}, shape=circle, inner sep=1.5pt]
\tikzstyle{Purple node}=[fill={rgb,255: red,150; green,0; blue,150}, draw={rgb,255: red,150; green,0; blue,150}, shape=circle, inner sep=1.5pt]
\tikzstyle{red}=[text={rgb,255: red,150; green,0; blue,2}, inner sep=0mm, tikzit fill=white, tikzit draw={rgb,255: red,191; green,191; blue,191}]
\tikzstyle{purple}=[text={rgb,255: red,150; green,0; blue,150}, inner sep=0mm, tikzit fill=white, tikzit draw={rgb,255: red,191; green,191; blue,191}]
\tikzstyle{white morphism}=[fill=white, draw=white, shape=rectangle, tikzit draw={rgb,255: red,139; green,139; blue,139}]
\tikzstyle{curly brace}=[decorate, decoration={brace,amplitude=5pt}]
\tikzstyle{costate}=[fill=white, draw=black, shape=circle, regular polygon, regular polygon sides=3, minimum width=0.8cm, inner sep=0pt]
\tikzstyle{arrow}=[->]
\tikzstyle{dashed box}=[-, dashed]
\tikzstyle{blue arrow}=[-, draw={rgb,255: red,15; green,0; blue,150}, tikzit category=blue]
\tikzstyle{mapsto}=[{|->}]
\tikzstyle{double wire}=[-, double]
\tikzset{
  myblock/.style={
    draw,text width=0.6cm,minimum height=0.4cm,align=center
  },
  longblock/.style={
    draw,text width=1.6cm,minimum height=0.4cm,align=center
  },
  arrowleft/.style 2 args={
    decoration={
      markings,
      mark=at position #1 with {\node[left] {#2};}, 
    },
  postaction=decorate  
  },
  arrowright/.style 2 args={
    decoration={
      markings,
      mark=at position #1 with {\node[right] {#2};}, 
    },
  postaction=decorate  
  },
  triangle/.style = {regular polygon, regular polygon sides=3,
    draw, text width=0.6cm,minimum height=0.4cm,
    inner sep=-2mm, outer sep=0mm,
    align=center,
    shape border rotate=-180}
}
\def\thm@space@setup{%
  \thm@preskip=10pt % Space above the theorem
  \thm@postskip=10pt % Space after the theorem
}
\newtheorem{theorem}{Theorem}[section]
\newtheorem{proposition}{Proposition}[section]
\newtheorem{corollary}{Corollary}[theorem]
\newtheorem{lemma}{Lemma}[section]
\theoremstyle{definition}
\newtheorem{definition}{Definition}[section]
\theoremstyle{definition}
\theoremstyle{definition}
\newtheorem{example}{Example}[section]
\theoremstyle{remark}
\newtheorem*{remark}{Remark}
\newcommand{\newterm}[1]{\textbf{#1}}
\begin{document}

% If your paper is accepted and the title of your paper is very long,
% the style will print as headings an error message. Use the following
% command to supply a shorter title of your paper so that it can be
% used as headings.
%
%\runningtitle{I use this title instead because the last one was very long}

% If your paper is accepted and the number of authors is large, the
% style will print as headings an error message. Use the following
% command to supply a shorter version of the author names so that
% they can be used as headings (for example, use only the surnames)
%
%\runningauthor{Surname 1, Surname 2, Surname 3, ...., Surname n}

\twocolumn[

% \aistatstitle{A unified Categorical Framework for Causal Abstractions}
\aistatstitle{Causal Abstractions, Categorically Unified}

\aistatsauthor{Markus Englberger \And Devendra Singh Dhami}

\aistatsaddress{Department of Mathematics and Computer Science, \\ Eindhoven University of Technology}]

\begin{abstract}
We present a categorical framework for relating causal models that represent the same system at different levels of abstraction. We define a causal abstraction as natural transformations between appropriate Markov functors, which concisely consolidate desirable properties a causal abstraction should exhibit. Our approach unifies and generalizes previously considered causal abstractions, and we obtain categorical proofs and generalizations of existing results on causal abstractions. Using string diagrammatical tools, we can explicitly describe the graphs that serve as consistent abstractions of a low-level graph under interventions. We discuss how methods from mechanistic interpretability, such as circuit analysis and sparse autoencoders, fit within our categorical framework. We also show how applying do-calculus on a high-level graphical abstraction of an acyclic-directed mixed graph (ADMG), when unobserved confounders are present, gives valid results on the low-level graph, thus generalizing an earlier statement by \citet{CDAGS}. We argue that our framework is more suitable for modeling causal abstractions compared to existing categorical frameworks. Finally, we discuss how notions such as $\tau$-consistency and constructive $\tau$-abstractions can be recovered with our framework. 

\end{abstract}

\section{INTRODUCTION}

This paper presents a unified categorical framework for causal abstractions, synthesizing and extending previous work by \cite{rubenstein2017causalconsistencystructuralequation, beckers2019abstractingcausalmodels, CDAGS, otsuka2022}. By defining causal abstractions as natural transformations involving general Markov categories, we obtain a general treatment of deterministic and probabilistic as well as discrete, continuous, or mixed random variables. Our framework also models abstractions where there isn't a simple one-to-one mapping between interventions on high-level and low-level variables. We achieve this by relaxing the assumption of a strict monoidal functor to a lax monoidal functor. Further, by defining an alternative causal abstraction with a reversed natural transformation, we differentiate between two distinct types of abstraction. One type clusters variable domains based on their shared effect on causal children, while the other clusters them based on how they are affected by causal parents.

We relate our framework to earlier work on causal abstractions. In \cite{CDAGS}, the authors show how Causal Bayesian Networks with unobserved confounders can be abstracted by partitioning variables, such that interventional distribution also factorizes over the clustered graph and such that applying do-calculus on the high-level clustered graph produces valid results for the low-level graph. We generalize these results employing concise categorical proofs. 
In \cite{beckers2019abstractingcausalmodels}, the authors introduce strong $\tau$-abstractions and a stronger version called constructive $\tau$-abstractions where there has to be an alignment between high-level variables and subsets of low-level variables. They conjecture that under a few minor technical conditions, every strong $\tau$-construction is also a constructive $\tau$-abstraction. By pointing to our earlier discussion of relaxing the assumption of strict to lax monoidal functors, we can describe examples of strong $\tau$-abstractions that are not constructive $\tau$-abstractions.

Causal abstractions have also been introduced in the field of mechanistic interpretability, see e.g. \cite{geiger2025causalabstractiontheoreticalfoundation}. In light of the linear representation hypothesis and the phenomenon of superposition, the concepts one would like to be able to intervene do not generally coincide with individual or sets of neurons. We can again model this via lax monoidal Markov functors and frame the task of training an appropriate sparse autoencoder - where the concepts are aligned - as finding an appropriate natural transformation between a lax Markov functor and a strict Markov functor.
The closest framework to ours is the work by \cite{otsuka2022}. We argue that our framework is comparatively more suitable as an abstract framework for causal abstractions.

\section{MARKOV CATEGORIES AND CAUSAL MODELS}
In this section, we introduce a categorical formulation of causal models. %, following \cite{jacobs2019,Fritz_2023}. 
\cite{Fritz_2023} introduced \textit{Markov categories}, representing the morphisms in a monoidal category graphically as string diagrams:
\begin{definition}
	\label{cd_cat}
	A \newterm{Markov category} is a symmetric monoidal category $(M, \otimes, I)$ with a
	commutative comonoid structure on each object $X$, consisting of a comultiplication and counit, called \textbf{copying} and \textbf{discarding}:
	\begin{equation*}
			\begin{tikzpicture}
	\begin{pgfonlayer}{nodelayer}
		\node [style=none] (0) at (0, 0.5) {};
		\node [style=bn] (1) at (0, 0.75) {};
		\node [style=none] (2) at (-0.5, 1.25) {};
		\node [style=none] (3) at (0.5, 1.25) {};
		\node [style=none] (4) at (0, 0.25) {$X$};
		\node [style=none] (5) at (-2.5, 0.75) {$\mathsf{copy}_X$};
		\node [style=none] (6) at (-1.25, 0.75) {=};
	\end{pgfonlayer}
	\begin{pgfonlayer}{edgelayer}
		\draw (0.center) to (1);
		\draw [bend left=45, looseness=1.25] (1) to (2.center);
		\draw [bend right=45, looseness=1.25] (1) to (3.center);
	\end{pgfonlayer}
\end{tikzpicture} \hspace{1cm}
			\begin{tikzpicture}
	\begin{pgfonlayer}{nodelayer}
		\node [style=none] (5) at (-1, 0.25) {};
		\node [style=bn] (6) at (-1, 1) {};
		\node [style=none] (7) at (-1, 0) {$X$};
		\node [style=none] (8) at (-3.25, 0.5) {$\mathsf{discard}_X$};
		\node [style=none] (9) at (-2, 0.5) {=};
	\end{pgfonlayer}
	\begin{pgfonlayer}{edgelayer}
		\draw (5.center) to (6);
	\end{pgfonlayer}
\end{tikzpicture}
	\end{equation*}
	satisfying the commutative comonoid equations,
	\begin{equation*}\begin{split}\label{comonoid_eq}
			\begin{tikzpicture}[scale=0.6]
	\begin{pgfonlayer}{nodelayer}
		\node [style=none] (0) at (-9, 1.25) {};
		\node [style=none] (1) at (-8, 1.25) {};
		\node [style=none] (2) at (-9, 0.25) {};
		\node [style=none] (3) at (-8, 0.25) {};
		\node [style=none] (5) at (-9, 0) {};
		\node [style=none] (6) at (-8, 0) {};
		\node [style=none] (8) at (-8.5, -1) {};
		\node [style=bn] (9) at (-8.5, -0.5) {};
		\node [style=none] (10) at (-7, 0) {$=$};
		\node [style=none] (11) at (-6, 1) {};
		\node [style=none] (12) at (-5, 1) {};
		\node [style=none] (14) at (-5.5, -1) {};
		\node [style=none] (15) at (-8.5, 0.75) {};
		\node [style=none] (16) at (-8.5, -1.25) {$X$};
		\node [style=none] (17) at (1, -1.25) {$X$};
		\node [style=bn] (18) at (-5.5, 0) {};
		\node [style=none] (19) at (-5.5, -1.25) {$X$};
		\node [style=none] (20) at (-1.5, -1.25) {$X$};
		\node [style=none] (22) at (-1, 1) {};
		\node [style=none] (23) at (-1.5, -1) {};
		\node [style=bn] (24) at (-1.5, 0) {};
		\node [style=bn] (27) at (-2, 1) {};
		\node [style=none] (28) at (0, 0) {$=$};
		\node [style=none] (29) at (1, 1) {};
		\node [style=none] (30) at (1, -1) {};
		\node [style=none] (32) at (4, 1) {};
		\node [style=none] (33) at (4.75, 1) {};
	\end{pgfonlayer}
	\begin{pgfonlayer}{edgelayer}
		\draw (2.center) to (5.center);
		\draw (3.center) to (6.center);
		\draw [bend left] (6.center) to (9);
		\draw [bend right, looseness=0.75] (5.center) to (9);
		\draw (9) to (8.center);
		\draw [bend left, looseness=0.75] (2.center) to (15.center);
		\draw [bend right, looseness=0.75] (3.center) to (15.center);
		\draw [bend right] (15.center) to (1.center);
		\draw [bend left] (15.center) to (0.center);
		\draw (14.center) to (18);
		\draw [bend left] (18) to (11.center);
		\draw [bend right] (18) to (12.center);
		\draw (23.center) to (24);
		\draw [bend right] (24) to (22.center);
		\draw [bend right] (27) to (24);
		\draw (30.center) to (29.center);
	\end{pgfonlayer}
\end{tikzpicture}

	\end{split}\end{equation*}

        \begin{equation*}\begin{split}
            \begin{tikzpicture}[scale=0.6]
	\begin{pgfonlayer}{nodelayer}
            \node [style=none] (26) at (4.75, -1.25) {$X$};
            \node [style=none] (31) at (9, -1.25) {$X$};
		\node [style=none] (34) at (5.75, 1) {};
		\node [style=bn] (35) at (5.25, 0.25) {};
		\node [style=bn] (36) at (4.75, -0.25) {};
		\node [style=none] (37) at (4.75, -1) {};
		\node [style=none] (38) at (6.75, 0) {$=$};
		\node [style=none] (39) at (9.75, 1) {};
		\node [style=none] (40) at (9, 1) {};
		\node [style=none] (41) at (8, 1) {};
		\node [style=bn] (42) at (8.5, 0.25) {};
		\node [style=bn] (43) at (9, -0.25) {};
		\node [style=none] (44) at (9, -1) {};
	\end{pgfonlayer}
	\begin{pgfonlayer}{edgelayer}
		\draw [bend right, looseness=0.75] (33.center) to (35);
		\draw [bend left, looseness=0.75] (34.center) to (35);
		\draw [bend right] (32.center) to (36);
		\draw [bend right=15] (36) to (35);
		\draw (37.center) to (36);
		\draw [bend left, looseness=0.75] (40.center) to (42);
		\draw [bend right, looseness=0.75] (41.center) to (42);
		\draw [bend left] (39.center) to (43);
		\draw [bend left=15] (43) to (42);
		\draw (44.center) to (43);
	\end{pgfonlayer}
\end{tikzpicture}
        \end{split}
        \end{equation*}
   
	The comonoid structures must be multiplicative with respect to the monoidal structure:
	\begin{align*}\begin{split}\label{compatible_monoid}
			\begin{tikzpicture}[scale=0.6]
	\begin{pgfonlayer}{nodelayer}
		\node [style=none] (46) at (-6.25, -0.5) {};
		\node [style=bn] (47) at (-6.25, 0.25) {};
		\node [style=none] (48) at (-7, 1) {};
		\node [style=none] (49) at (-5.5, 1) {};
		\node [style=none] (50) at (-4.5, 0) {=};
		\node [style=none] (51) at (-2.75, -0.5) {};
		\node [style=bn] (52) at (-2.75, 0.25) {};
		\node [style=none] (53) at (-3.5, 1) {};
		\node [style=none] (54) at (-2, 1) {};
		\node [style=none] (55) at (-2.25, -0.5) {};
		\node [style=bn] (56) at (-2.25, 0.25) {};
		\node [style=none] (57) at (-3, 1) {};
		\node [style=none] (58) at (-1.5, 1) {};
		\node [style=none] (72) at (-6.25, -0.75) {$X \otimes Y$};
		\node [style=none] (73) at (-2.75, -0.75) {$X$};
		\node [style=none] (74) at (-2.25, -0.75) {$Y$};
            \node [style=none] (0) at (0.75, -0.5) {};
		\node [style=none] (1) at (2.5, 0) {=};
		\node [style=none] (2) at (4, -0.5) {};
		\node [style=none] (3) at (5, -0.5) {};
		\node [style=bn] (6) at (0.75, 0.5) {};
		\node [style=bn] (7) at (4, 0.5) {};
		\node [style=bn] (8) at (5, 0.5) {};
		\node [style=none] (14) at (0.75, -0.75) {$X\otimes Y$};
		\node [style=none] (15) at (4, -0.75) {$X$};
		\node [style=none] (16) at (5, -0.75) {$Y$};
	\end{pgfonlayer}
	\begin{pgfonlayer}{edgelayer}
		\draw [bend left=45, looseness=0.75] (47) to (48.center);
		\draw [bend right=45, looseness=0.75] (47) to (49.center);
		\draw (47) to (46.center);
		\draw [bend left=45, looseness=0.75] (52) to (53.center);
		\draw [bend right=45, looseness=0.75] (52) to (54.center);
		\draw (52) to (51.center);
		\draw [bend left=45, looseness=0.75] (56) to (57.center);
		\draw [bend right=45, looseness=0.75] (56) to (58.center);
		\draw (56) to (55.center);
		\draw (6) to (0.center);
		\draw (7) to (2.center);
		\draw (8) to (3.center);
	\end{pgfonlayer}
\end{tikzpicture} \\
            \begin{tikzpicture}[scale=0.6]
	\begin{pgfonlayer}{nodelayer}
		\node [style=none] (4) at (2.25, -0.5) {};
		\node [style=none] (5) at (4, 0) {=};
		\node [style=bn] (9) at (2.25, 0.5) {};
		\node [style=none] (10) at (5, 0.75) {};
		\node [style=none] (11) at (5, -0.75) {};
		\node [style=none] (12) at (6.25, -0.75) {};
		\node [style=none] (13) at (6.25, 0.75) {};
		\node [style=none] (17) at (2.25, -0.75) {$I$};
            \node [style=none] (59) at (9.25, -0.5) {};
		\node [style=bn] (60) at (9.25, 0.25) {};
		\node [style=none] (61) at (8.5, 1) {};
		\node [style=none] (62) at (10, 1) {};
		\node [style=none] (63) at (12, 0.75) {};
		\node [style=none] (64) at (12, -0.75) {};
		\node [style=none] (65) at (13.25, -0.75) {};
		\node [style=none] (66) at (13.25, 0.75) {};
		\node [style=none] (67) at (11, 0) {=};
		\node [style=none] (75) at (9.25, -0.75) {$I$};
	\end{pgfonlayer}
	\begin{pgfonlayer}{edgelayer}
		\draw (9) to (4.center);
		\draw [style=dashed box] (10.center) to (11.center);
		\draw [style=dashed box] (11.center) to (12.center);
		\draw [style=dashed box] (12.center) to (13.center);
		\draw [style=dashed box] (13.center) to (10.center);
            \draw [bend left=45, looseness=0.75] (60) to (61.center);
		\draw [bend right=45, looseness=0.75] (60) to (62.center);
		\draw (60) to (59.center);
		\draw [style=dashed box] (63.center) to (64.center);
		\draw [style=dashed box] (64.center) to (65.center);
		\draw [style=dashed box] (65.center) to (66.center);
		\draw [style=dashed box] (66.center) to (63.center);
	\end{pgfonlayer}
\end{tikzpicture} 
	\end{split}\end{align*}
    The monoidal unit $I$ is required to be terminal.
     A \textbf{Markov functor} is a strict monoidal functor between Markov categories respecting the Markov structure.
\end{definition}

Relevant Markov categories include the category of stochastic Markov kernels \textbf{Stoch} and the category of sets \textbf{Set}. For an extensive introduction and list of Markov categories, we refer to \cite{Fritz_2020}. For an introduction to monoidal categories, we refer to \cite{monoidal}.

To define our causal abstraction  framework, we also need the notion of \textbf{deterministic} morphisms in a Markov category  (\cite{CARBONI198711}):
\begin{definition}
	\label{defn_det}
	A morphism $p : X \to Y$ in a Markov category is \textbf{deterministic} if it respects the comultiplication,

		\begin{tikzpicture}[scale=0.5]
	\begin{pgfonlayer}{nodelayer}
        \node (31) at (-6,0) {};
		\node [style=none] (13) at (-3.25, -2) {};
		\node [style=bn] (14) at (-3.25, 0) {};
		\node [style=none] (15) at (-4.25, 1.25) {};
		\node [style=none] (16) at (-2.25, 1.25) {};
		\node [style=none] (17) at (-2.25, 1.25) {};
		\node [style=morphism] (18) at (-4.25, 1.25) {$p$};
		\node [style=morphism] (19) at (-2.25, 1.25) {$p$};
		\node [style=none] (20) at (3, -2) {};
		\node [style=bn] (21) at (3, 0) {};
		\node [style=none] (22) at (2, 1) {};
		\node [style=none] (23) at (4, 1) {};
		\node [style=none] (24) at (4, 1) {};
		\node [style=none] (25) at (-4.25, 2.25) {};
		\node [style=none] (26) at (-2.25, 2.25) {};
		\node [style=none] (27) at (2, 2.25) {};
		\node [style=none] (28) at (4, 2.25) {};
		\node [style=none] (29) at (0, 0) {$=$};
		\node [style=morphism] (30) at (3, -1) {$p$};
	\end{pgfonlayer}
	\begin{pgfonlayer}{edgelayer}
		\draw [style=none] (13.center) to (14);
		\draw [style=none, bend left=45] (14) to (15.center);
		\draw [style=none, bend right=45] (14) to (16.center);
		\draw [style=none] (20.center) to (21);
		\draw [style=none, bend left=45] (21) to (22.center);
		\draw [style=none, bend right=45] (21) to (23.center);
		\draw (25.center) to (18);
		\draw (26.center) to (19);
		\draw (27.center) to (22.center);
		\draw (28.center) to (24.center);
		\draw (15.center) to (18);
		\draw (17.center) to (19);
	\end{pgfonlayer}
\end{tikzpicture}
	
\end{definition}
In Stoch, the conventionally known concept of deterministic morphisms and categorical notion coincide.

Furthermore, every Directed Acyclic Graph (DAG) has an associated Markov category~(\cite{Fritz_2020}):
\begin{definition}\label{def:generatedCats}
    Given a DAG $L=(\mathbf{V^L},E^L)$, let \textbf{Free$_L$} be the Markov category freely generated by the nodes $\mathbf{V^L}$ as objects and boxes \begin{tikzpicture}[baseline,
    box/.style={draw,rounded corners,minimum width=8mm,minimum height=5mm,inner sep=2pt}]
  \scriptsize
  \node[box] (F) {};
  \draw[arrowright={0.5}{$A$}]
    ([yshift=7pt]F.north) -- (F.north);
  \draw[arrowleft={0.5}{$pa^L_A$}]
    ([xshift=-7pt, yshift=-7pt]F.south) -- ([xshift=-7pt]F.south)
    node[below right]{$\cdots$};
  \draw[arrowright={0.5}{}]
    ([xshift=+7pt, yshift=-7pt]F.south) -- ([xshift=+7pt]F.south);
\end{tikzpicture} for $A \in \mathbf{V^L}$ as morphisms, where $pa^L(A)$ denotes the parents of $A$ in the graph $L$.

Further, let  restr$(Free_L)$ denote the category arising from $Free_L$ after restricting to those morphisms where every generating box appears at most once.

\end{definition}

For an explicit construction of freely generated Markov categories, we refer to \cite{Fritz_2023}. 

Now, we are able to define a causal model over a general Markov category:
\begin{definition}\label{def:strict Markov functor}
A \textbf{causal model} over a DAG $L=(\mathbf{V^L},E^L)$ is a Markov functor $F_L:{Free_L}\rightarrow M$, where $M$ is a Markov category. 
\end{definition}

Given a causal model $F_L:{Free_L}\rightarrow M$ and a morphism $A \rightarrow B$ in $Free_L$ we will denote its image under $F_L$ as $p^{F_L}(B|A)$ and call these images distributions. Further, for a set of nodes $A\subset \mathbf{V^L}$, we simply denote the tensor product in $Free_L$ of these nodes as $A$.

By a result of \cite{jacobs2019} (Proposition 3.1), we can identify Causal Bayesian Networks (CBNs) over a DAG $L$ with functors of the form $Free_L \rightarrow Stoch$.

Further, the morphisms in the restricted category restr($Free_L$) exactly correspond to all possible types of interventional distributions:

\begin{proposition}\label{prop:restrictedMarkov}
    Consider a CBN $F_L:Free_L \rightarrow Stoch$ over a DAG $L=(\mathbf{V^L},E^G)$. For $A,B \subset \mathbf{V^L}$, the interventional distribution
        \[p^{F_L}(B|do(A)) := \underset{U:=\mathbf{V^L}\setminus (A \cup B)}{\int} \prod_{C \in \mathbf{V^L}\setminus A} p^{F_L}(C|pa^L(C)) dU\]
has a unique morphism in restr($Free_L$) associated to it and there are no other string diagrams in restr($Free_L$).
\begin{proof}
In restr($Free_L$), there is only one way to stack the generating morphisms to obtain a morphism with signature $A \rightarrow B$.
\end{proof}
\end{proposition}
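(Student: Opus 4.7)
The plan is to verify the two assertions separately: for every pair of subsets $A,B \subset \mathbf{V^L}$, (i) there exists at least one morphism $A \to B$ in restr($Free_L$) whose image under $F_L$ is the formula in the statement, and (ii) this morphism is the only morphism of that signature.

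For existence, I would construct the diagram explicitly by processing the nodes of $\mathbf{V^L}\setminus A$ in a topological order compatible with $L$. At each step I invoke the generating box for the current node $C$ and source its inputs $pa^L(C)$ either from $A$ (by copying an input wire) or from the outputs of boxes for previously processed nodes (by copying an earlier output wire). After all such boxes are invoked, the outputs corresponding to $U := \mathbf{V^L}\setminus(A\cup B)$ are discarded, and for nodes in $A\cap B$ one routes the input wire directly to the output. Because each generating box is invoked exactly once, the resulting string diagram lies in restr($Free_L$). Applying $F_L$ sends each generating box to $p^{F_L}(C\mid pa^L(C))$ and interprets discarding as marginalization, yielding precisely the integral in the statement.

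For uniqueness, fix an arbitrary morphism $f: A \to B$ in restr($Free_L$) and let $S \subseteq \mathbf{V^L}$ be the set of generators it uses. The key structural tool is that the monoidal unit $I$ is terminal in any Markov category, so any generating box in $f$ whose output is ultimately discarded can be removed and replaced by discarding its input wires. After such normalization, producing each $v\in B$ forces either $v\in A\cap B$ (routed by identity, since no other wire of type $v$ is available without invoking its box twice) or the generator for $v$ to be in $S$, and analogous reasoning propagates upward through ancestors not in $A$. The wiring of each used box is then rigid: each of its parent inputs is sourced from $A$ or from an already-produced output, and any ambiguity in how copies are distributed is collapsed by the commutative comonoid axioms. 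Combined with acyclicity of $L$, which prevents cyclic dependencies between the chosen sources, this forces $f$ to coincide with the canonical diagram constructed above.

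The main obstacle is the uniqueness step, specifically showing that different a priori choices of the set $S$ of invoked boxes give rise to the same morphism rather than distinct ones. This is the place where the construction genuinely relies on terminality of $I$ (to absorb unused boxes) together with the at-most-once restriction defining restr($Free_L$) (to rule out redundantly using a box multiple times and achieving wiring configurations that would otherwise be inequivalent under the bare comonoid laws). Once those are in hand, the claim reduces to the author's one-line remark that there is only one way to stack generators to achieve the signature $A \to B$.
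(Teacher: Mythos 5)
Your overall strategy---an explicit topological-order construction for existence followed by a rigidity argument for uniqueness---is an expanded version of the paper's one-sentence proof (``there is only one way to stack the generating morphisms to obtain a morphism with signature $A \to B$''), so in spirit you and the authors are doing the same thing, and your existence half is fine.

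However, the uniqueness half has a concrete gap at exactly the point you flag as the main obstacle. When a used box needs an input wire of type $P$ with $P \in A$, you assert that the wire is ``sourced from $A$ or from an already-produced output'' and that any remaining ambiguity is collapsed by the commutative comonoid axioms. But those axioms only identify different ways of copying and routing a \emph{given} wire; they do not identify the diagram that feeds the ambient input wire of type $P$ into the box with the diagram that discards that input and instead fires the generating box for $P$ --- which restr($Free_L$) permits, since that generator has not yet been used. Already for the two-node graph $X \to Y$ and signature $X \to Y$ this produces two genuinely distinct morphisms in restr($Free_L$): the box for $Y$ applied to the input (image $p(Y|do(X))$), and discard-the-input-then-regenerate-$X$ (image $p(Y)$). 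The same issue defeats your claim that a node $v \in A \cap B$ must be routed by the identity ``since no other wire of type $v$ is available without invoking its box twice'': invoking it once suffices and is allowed. Uniqueness as you argue it (and as the paper's one-liner implicitly assumes) therefore fails unless one additionally forbids a morphism from containing the generator of any node occurring in its domain, i.e., excises the mechanisms of intervened nodes --- the standard $do$-convention, but one stated neither in the definition of restr($Free_L$) nor in your argument. With that extra restriction added, your rigidity argument does close: every node of $B \setminus A$ and every ancestor outside $A$ must be produced by its own generator, parents in $A$ can only come from the input wires, and terminality of $I$ absorbs the boxes whose outputs are discarded, forcing the canonical diagram.
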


In light of \cref{prop:restrictedMarkov}, for a causal model $F_L:Free_L \rightarrow M$ over a general Markov category $M$, we can refer to the images of morphisms in restr($Free_L$) as \textbf{the interventional distributions of $F_L$}.

\begin{example}
    Consider a CBN over a DAG $L=$ 

\begin{tikzpicture}[scale = 0.4]

% Define the nodes
\node (A) at (0,0) {A};
\node (U) at (2,2) {U};
\node (B) at (4,0) {B};

% Draw the arrows
\draw[->] (U) -- (A);
\draw[->] (U) -- (B);
\draw[->] (A) -- (B);

\end{tikzpicture}   

with distribution $p(A B, U) = p(U) \cdot p(A|U) \cdot p(B|A,U)$. The distributions $p(A,B)$ and $p(B|do(A))$ correspond to the following string diagrams, respectively:

$
\begin{tikzpicture}[scale = 0.5, baseline,
  every node/.style={font=\small},
  wire/.style={line width=0.9pt},
  box/.style={draw,rounded corners,minimum width=8mm,minimum height=5mm,inner sep=2pt},
  dot/.style={circle,fill,inner sep=1.2pt}
]

  % Inputs at bottom
  %\node (A) at (0,-1) {$A$};
  % Boxes
  \node[box] (U) at (2,-1) {};
  \node[box] (AU) at (0,1) {};
  \node[box] (BAU) at (2,3) {};

  % Outputs
  \node (A) at (0,4) {$A$};
  \node (B) at (2,4) {$B$};

  % Split point for C
  \coordinate (split1) at (2,0);
  \coordinate (split2) at (0,2);
  \node (Ul) at (2.4, 0) {$U$};

  \draw[wire] (U) -- (BAU.south);                          % A -> f
  \draw[wire] (AU.north) -- (split2);                      % f -> split
  \node[dot] at (split1) {};   
  \node[dot] at (split2) {}; % splitting node
  \draw[wire] (split1) to[out=-180,in=-90] (AU.south);
  \draw[wire] (split2) to[out=0,in=-90] (BAU.255);
  \draw[wire] (split2) -- (A);                   % split -> output C
  \draw[wire] (BAU.north) to (B);       % split -> g
  % Optional type annotations
 % \node[left=1mm of f] {};
 % \node[right=1mm of g] {};

\end{tikzpicture},
\begin{tikzpicture}[scale=0.5, baseline,
  every node/.style={font=\small},
  wire/.style={line width=0.9pt},
  box/.style={draw,rounded corners,minimum width=8mm,minimum height=5mm,inner sep=2pt},
  dot/.style={circle,fill,inner sep=1.2pt}
]

  % Inputs at bottom
  %\node (A) at (0,-1) {$A$};
  % Boxes
  \node (Sp) at (-2,0) {};
  \node[box] (U) at (2,-1) {};
%  \node[box] (AU) at (0,1) {};
  \node[box] (BAU) at (2,3) {};

  % Outputs
  %\node (A) at (0,4) {$A$};
  \node (B) at (2,4) {$B$};

  % Split point for C
  \node (Aa) at (0,-1) {$A$};
  \node (Ul) at (2.4, 0) {$U$};
%  \coordinate (split2) at (0,2);

  \draw[wire] (U) -- (BAU.south);               
  \draw[wire] (Aa) to[out=90,in=-90] (BAU.255);
  \draw[wire] (BAU.north) to (B);       % split -> g
  % Optional type annotations
 % \node[left=1mm of f] {};
 % \node[right=1mm of g] {};

\end{tikzpicture}$
\end{example}

\section{Categorically Unifying Causal Abstractions}\label{CausalAbstractions}
We now lay out our categorical framework for causal abstractions.

\subsection{Causal abstractions as deterministic natural transformations}

We give a categorical definition of causal abstractions:
\begin{definition}\label{def:abstractDefinition}
    A causal model $F_H:Free_H \rightarrow M$ is a \textbf{causal abstraction} of a causal model $F_L: Free_L \rightarrow M$ if there exists a Markov functor $\iota:Free_H \rightarrow Free_L$ that embeds restr($Free_H$) into restr($Free_L$) and if there exists a natural transformation $\tau: F_L\iota \Rightarrow F_H$ whose components are deterministic.% there exists a full embedding $\iota:Free_H \rightarrow Free_L$ such that $F_H$ is a retract of $F_L \iota$.
\end{definition}

%Before giving the proof, 
To elaborate, we first discuss abstractions on the side of graphs:
%Define a category of DAGs with morphism as follwos:
\begin{definition}\label{def:operationsGraphicalAbstractions}
For a DAG $L=(\mathbf{V^L},E^L)$, define its \textbf{graphical abstractions} as the collection of DAGs that result from applying a sequence of the following two operations to $L$:
\begin{itemize}
    \item Deleting a node $A \in \mathbf{V^L}$ and adding edges from all of $A$'s parents to all its children. This operation is only valid if $A$ does not have two outgoing edges to different nodes, that is, if it is not a confounder. % This translates to the requirement that confounders cannot be removed unless its children have been merged in a previous step.
    \item Merging two nodes $A,B \in \mathbf{V^L}$ and combining all incoming edges to either $A$ or $B$ as incoming edges into the merged node and combining all outgoing edges of either $A$ or $B$ as outgoing edges of the merged node. 
\end{itemize}
\end{definition}

The following proposition describes the relation between a DAG $L$ and its graphical abstractions when mapped to the respective freely generated Markov categories.
\begin{proposition}\label{prop:embedding}
A DAG $H$ is a graphical abstraction of a DAG $L$ if and only if restr($Free_H$) embeds into restr($Free_L$).
\begin{proof}
    See appendix ~\ref{proofEmbedding}.
\end{proof}
\end{proposition}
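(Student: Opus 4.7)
The plan is to prove the two implications separately. The forward direction amounts to a concrete construction: build the embedding one operation at a time. The backward direction is more subtle: given an abstract embedding, one must reconstruct the sequence of deletions and merges, using \cref{prop:restrictedMarkov} to pin down the image data.

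For $(\Rightarrow)$, I would induct on the length of a sequence of operations producing $H$ from $L$. The base case is trivial, and embeddings compose, so it suffices to treat a single operation. For the deletion of a non-confounder $A$ with (at most one) child $B$, define $\iota$ to send each node of $H$ to the same-named node of $L$; on the generating box for $B$ in $Free_H$ (whose inputs are $pa^H(B) = (pa^L(B)\setminus\{A\})\cup pa^L(A)$ when $A\in pa^L(B)$), send it to the composite in $Free_L$ obtained by stacking the $A$-box on top of the $B$-box and discarding the unused output of the $A$-box (if any); other generating boxes go to their namesakes. The non-confounder hypothesis ensures $A$ has at most one outgoing edge, so this composite uses each $L$-generator exactly once and lies in restr($Free_L$). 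For a merge of $A$ and $B$, send the merged node to $A\otimes B$ and its generating box to the parallel composite of the $A$- and $B$-boxes of $Free_L$, with copies and discards reorganizing inputs and outputs as required. In both cases, monoidality and naturality of copying/discarding make $\iota$ a Markov functor, and fully faithfulness on the restricted category is immediate from \cref{prop:restrictedMarkov}, since morphisms there are determined by their signature.

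For $(\Leftarrow)$, suppose $\iota$ embeds restr($Free_H$) into restr($Free_L$). Strictness and monoidality force $\iota(A) = A_1\otimes \cdots\otimes A_{k_A}$ for each $A\in\mathbf{V^H}$, and injectivity on objects forces the sets $\{A_1,\dots,A_{k_A}\}$ to be pairwise disjoint, so they partition a subset $\mathbf{V'}\subseteq\mathbf{V^L}$. Construct $H$ from $L$ as follows: first merge each cluster $\{A_1,\dots,A_{k_A}\}$ into a single node called $A$, obtaining an intermediate graph $L'$; then delete all nodes of $\mathbf{V^L}\setminus \mathbf{V'}$ in a suitable order. The edges of the resulting graph must agree with those of $H$ because, by \cref{prop:restrictedMarkov}, the image $\iota(B)$ of the generating box for $B\in\mathbf{V^H}$ is the unique morphism $\iota(pa^H(B))\to\iota(B)$ in restr($Free_L$), which marginalizes exactly over $\mathbf{V^L}\setminus(\mathbf{V'}\cup pa^H(B))$; this uniquely reads off $pa^H(B)$ from the $L$-side ancestry and pins down the edge structure of the graph produced by the operations.

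The main obstacle is verifying that the deletions in the backward direction can be carried out in some legal order, i.e.\ that each node of $\mathbf{V^L}\setminus \mathbf{V'}$, after the prior merges and deletions, has at most one outgoing edge in the current graph. If some node $C\in\mathbf{V^L}\setminus\mathbf{V'}$ remained a confounder throughout, then two distinct generating boxes of $Free_H$ would each require using the $C$-box of $Free_L$ in their $\iota$-image, contradicting the restr-hypothesis that every generator appears at most once across any single morphism, combined with the faithfulness of $\iota$. Making this bookkeeping precise - choosing a topological order on the nodes to be deleted and checking non-confoundedness step by step - is the technical heart of the argument; everything else follows from the rigidity supplied by \cref{prop:restrictedMarkov}.
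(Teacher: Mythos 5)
Your strategy coincides with the paper's on both directions: the forward implication is proved one operation at a time by sending each generator of $Free_H$ to its unique counterpart diagram in $Free_L$ and using the rigidity of \cref{prop:restrictedMarkov} for fullness, and the backward implication reads the clusters and the removed nodes off the object assignment and then argues that no removed node may act as a confounder. Your ``the $C$-box would have to appear twice'' argument is a compact repackaging of what the paper does with explicit string diagrams (for a deleted confounder of $A$ and $B$ it exhibits either a disconnected-versus-connected mismatch for $I\to A\otimes B$, or a shape mismatch between the diagrams of signature $A\to B$ and $I\to B$), and it is sound for the same underlying reason: the image of a morphism of restr($Free_H$) must be the unique morphism of its signature in restr($Free_L$), which uses each generator at most once. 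One small imprecision in your forward direction: the image of the generating box of a merged node $(A,B)$ is not a \emph{parallel} composite of the $A$- and $B$-boxes when $A\in pa^L(B)$; the $A$-output must be copied and fed sequentially into the $B$-box, as in the paper's explicit diagram.

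The genuine gap is exactly the step you flag and leave open: exhibiting a legal order for the deletions. The paper closes it with a short device worth adopting: since merging and deletion operations commute, one may first perform all merges dictated by the clusters, then merge \emph{all} of the to-be-removed nodes into a single node, and finally delete that one node. This collapses the ``check non-confoundedness at each step of some topological order'' bookkeeping into a single confounder check on a single node, to which the two-children obstruction applies directly; in your step-by-step formulation you would additionally have to prove that ``some legal deletion order exists'' is equivalent to the embedding condition, which is precisely the lemma you did not supply. A secondary point: your claim that injectivity on objects forces the clusters $\{A_1,\dots,A_{k_A}\}$ to be pairwise disjoint does not follow from injectivity alone, since distinct tensor products can share factors; disjointness has to be extracted from the requirement that the images of the generating boxes jointly lie in restr($Free_L$). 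The paper asserts this without proof as well, but since you isolate it as an explicit step, it needs its own justification.
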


\cref{prop:restrictedMarkov} and \cref{prop:embedding} together tell us that the graphical abstractions of a DAG $L$ are exactly those graphs whose induced types of interventional distributions can be related back to those induced by $L$. 

Consider causal models $F_H:Free_H\rightarrow M$, $F_L:Free_L \rightarrow M$ and a family of deterministic morphisms $(\tau_A:F_L(A) \rightarrow F_H(A))_{A \in \mathbf{V^H}}$. Since $Free_H$ is generated by the edges in $H$, $\tau$ already constitutes a natural transformation if for all $A \in \mathbf{V_H}$, the following diagram commutes:

\[
\begin{tikzcd}[column sep=huge, row sep=large] 
F_L(pa^H(A)) \arrow[r, "p^{F_L}(A|pa^H(A))"] \arrow[d, "\tau_{pa^H(A)}"'] & F_L(A) \arrow[d, "\tau_{A}"] \\
F_H(pa^H(A)) \arrow[r, "p^{F_H}(A|pa^H(A))"] & F_H(A)\
\end{tikzcd}
\]

In other words, $F_H$ is a causal abstraction of $F_L$ if the high-level mechanisms are compatible with the mechanisms associated to the corresponding clusters of low-level variables.

When restricting to $M=\text{Stoch}$, we can recover the following notion of causal abstraction between CBNs:

\begin{definition}\label{def:simpleD}
    Consider two CBNs $F_L, F_H$ over DAGs $L=(\mathbf{V^L},E_L),H=(\mathbf{V^H},E^H)$ with distribution $p^{F_L}, p^{F_H}$, respectively, such that the nodes of $H$ correspond to disjoint sets of nodes in $L$ and the associated domains are partitions of the domains in the associated cluster of variables. The causal model over $p^{F_H}$ is a causal abstraction between CBNs of $p^{F_L}$ if all interventional distributions coincide in the following sense: $\forall A,B \in \mathbf{V^H}, a \in F_L(A), b \in F_L(B):$
    \begin{align}\label{eq:simpleDef}
    %& \forall V^H_i, V^H_j \in \mathbf{V^H}, a\in V^H_i, b \in V^H_i: p^L(\tilde{b}|do(a)) \\
    & p^{F_L}(\tilde{b}|do(a)) = p^{F_H}(\tilde{b}|do(\tilde{a}))
    \end{align}

    To simplify notation, for a low-level value $a \in F_L(A)$ we denote its associated cluster of values as $\tilde{a}$, both when viewed on the high-level CBN, i.e. $\tilde{a} := \tau_A(a)\in F_H(A)$, as well as when viewed on the low-level CBN, i.e. $\tilde{a} :=\tau_A^{-1}(\tau_A(a)))\subset F_L(A)$.
    
\end{definition}

\cref{def:simpleD} states that it does not matter whether one first applies the causal mechanisms on the low-level model $F_L$ and then maps to the high-level model $F_H$ or vice-versa. We are now ready to present our main result proving that our categorical \cref{def:abstractDefinition} generalizes \cref{def:simpleD}. We prove that our result is a sufficient and necessary condition for unifying causal abstractions within a categorical framework. 
\begin{theorem}\label{thm:MainResult}
    Consider two CBNs $F_L,F_H$. Then $F_H$ is a causal abstraction of $F_L$  as in \cref{def:simpleD} \textbf{\underline{if and only if}} $F_H$ is a causal abstraction of $F_L$ as in the categorical \cref{def:abstractDefinition} in the case $M=\text{Stoch}$.
 \end{theorem}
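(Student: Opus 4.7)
The plan is to prove the equivalence by handling the two directions separately, in both cases exploiting the dictionary between the cluster/partition data of \cref{def:simpleD} and the pair $(\iota,\tau)$ of \cref{def:abstractDefinition}.

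For the direction ``categorical $\Rightarrow$ classical'', I would start from the data $(\iota,\tau)$. Because each component $\tau_A$ is deterministic and we are in Stoch, it is an ordinary function $F_L(\iota(A))\to F_H(A)$, and its fibres $\tau_A^{-1}(\tilde a)$ furnish the partition of $F_L(\iota(A))$ required by \cref{def:simpleD}. For the interventional equality, fix $A,B\in \mathbf{V^H}$ and consider the unique morphism $\phi\in \mathrm{restr}(Free_H)$ with signature $A\to B$ supplied by \cref{prop:restrictedMarkov}. Naturality of $\tau$ at $\phi$ yields
\[
 \tau_B \circ F_L(\iota(\phi)) \;=\; F_H(\phi)\circ \tau_A ,
\]
and \cref{prop:restrictedMarkov} identifies the two sides with the kernels $\tau_B\circ p^{F_L}(\iota(B)|do(\iota(A)))$ and $p^{F_H}(B|do(A))\circ \tau_A$. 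Evaluating at a specific $a\in F_L(\iota(A))$ and reading off the value at $\tilde b\in F_H(B)$ then gives \eqref{eq:simpleDef}, because deterministic pushforward along $\tau_B$ is exactly integration of the low-level probability over the fibre $\tilde b$.

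For the converse direction, I build $(\iota,\tau)$ from the partition data. On objects, $\iota$ sends each $A\in \mathbf{V^H}$ to the tensor product in $Free_L$ of the low-level nodes in the cluster associated with $A$. The coincidence of all interventional distributions in \cref{def:simpleD} forces $H$ to be obtainable from $L$ by the operations of \cref{def:operationsGraphicalAbstractions}, so $H$ is a graphical abstraction of $L$, and \cref{prop:embedding} then supplies a Markov functor $\iota$ embedding $\mathrm{restr}(Free_H)$ into $\mathrm{restr}(Free_L)$. I take $\tau_A$ to be the quotient map sending $a$ to its partition class $\tilde a$, which is a function and hence deterministic in Stoch. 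By the universal property of the free Markov category, it suffices to verify naturality at the generating boxes of $Free_H$, and the commuting square at the generator with target $A$ and source $pa^H(A)$ is exactly \eqref{eq:simpleDef} read with the natural product source.

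The main obstacle is the careful bookkeeping at the last step of each direction, where one moves between the pointwise equality in \eqref{eq:simpleDef} and the diagrammatic equation of Markov kernels. One must verify that the pushforward of the kernel $p^{F_L}(\cdot|do(a))$ along the deterministic $\tau_B$ really coincides with $\tilde b\mapsto p^{F_L}(\tilde b|do(a))$, and that quantifying over pairs $A,B\in\mathbf{V^H}$ in \eqref{eq:simpleDef} is strong enough to cover every multi-input generator of $Free_H$; the latter amounts to reading \eqref{eq:simpleDef} with $A$ allowed to range over products of high-level nodes, which is the natural tensorial extension implicit in \cref{def:simpleD}.
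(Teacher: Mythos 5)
Your proposal is correct and follows essentially the same route as the paper: the forward direction uses naturality of $\tau$ at the unique morphism $A\to B$ in $\mathrm{restr}(Free_H)$ supplied by \cref{prop:restrictedMarkov} together with the fact that deterministic pushforward along $\tau_B$ is integration over the fibre, and the converse builds $\iota$ from the clustering and invokes the embedding characterization. If anything, your converse is slightly more complete than the paper's, which only argues that $\iota$ is a well-defined embedding and leaves the construction of $\tau$ as the quotient map and the verification of naturality at the generating boxes (your reading of \eqref{eq:simpleDef} with product sources) implicit.
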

 
\begin{proof}
    \textbf{\textit{Sufficiency}}: Let $F_H$ be a causal abstraction of $F_L$ as in the categorical \cref{def:abstractDefinition} given by a natural transformation $\tau: F_L \iota \Rightarrow F_H$. Any high-level node $A \in \mathbf{V^H}$ corresponds to the cluster $\iota(A)$ of low-level nodes.  Since $\tau$ respects monoidality, $\tau_{\mathbf{V^H}}$ factorizes as $\tau_{\mathbf{V^H}}=\underset{A \in \mathbf{V^H}}{\prod} \tau_{A}$. The natural transformation dictates that every distribution $p^{F_H}(A\in \mathbf{V^H})= p^{F_H}(A|I) = \eta_A \circ p^{F_L}(A|I)$ and hence every $\tau_A$ is surjective. Therefore, the maps $\tau_{A}$ give a cluster the domains of clusters of low-level nodes. Consider two high-level nodes $A, B$ and the interventional distribution $p(B|do(A))$. By \cref{prop:restrictedMarkov}, there exist unique associated string diagrams $A \rightarrow B$ in restr($Free_H$) and restr($Free_L$). Since $\tau$ is a natural transformation, the following diagram commutes:
      \[
\begin{tikzcd}[column sep=huge, row sep=large]
F_L(A) \arrow[r, "p^{F_L}(B|A)"] \arrow[d, "\tau_{A}"'] & F_L(B) \arrow[d, "\tau_{B}"] \\
F_H(A) \arrow[r, "p^{F_H}(B|A)"] & F_H(B)\
\end{tikzcd}
\]

Then 
\begin{align*}
p^{F_L}(\tilde{b}|a) &= \int_{b \in \tau_{B}^{-1}(\tilde{b})}p^{F_L}(b|a)db \\
                              &= \int_{b \in \tau_{B}^{-1}(\tilde{b})} p^{F_L}(b|a) \cdot \tau_{B}(\tilde{b}|b) db \\
                              &=\tau_{A}(\tilde{a}|a) \cdot p^{F_H}(\tilde{b}|\tilde{a}) \label{commutingUsed} \\
                              &= p^{F_H}(\tilde{b}|\tilde{a})
\end{align*}

\textbf{\textit{Necessity}}: Let $F_H$ be a causal abstraction between CBNs of $F_L$  as in \cref{def:simpleD}. Let $\iota$ map the objects of $Free_H$ to their counterparts in $Free_L$ induced by the clustering of low-level nodes. Since restr($Free_H$) and restr($Free_L$) exactly correspond to the respective types of interventional distribution by \cref{prop:restrictedMarkov}$, \iota$ has to map every morphism in restr($Free_H$) to its unique counterpart. Now assume this would not constitute an embedding; then $\iota$ is not functorial and hence there would be a type of interventional distribution in restr($Free_H$) that has no image in $Free_L$, violating \cref{eq:simpleDef}.

\end{proof}

\subsection{The category of causal abstractions}

It follows from \cref{def:abstractDefinition} that causal abstractions are compositional, i.e. if $F'$ is an abstraction of $F$ and $F''$ is an abstraction of $F'$, then $F''$ is an abstraction of $F$. We can therefore refer to a category of causal abstractions, denoted $C_M$ (where $M$ is a Markov category), that has causal models as objects and causal abstractions as morphisms. 
Consider a causal model $F \in C_M$; potential categories of interest are the slice category ($C_M/F$) that has as objects all causal models implementing $F$ and the coslice category ($F/C_M$) that has as objects all submodels of $F$.

\subsection{Non-aligned interventionals}\label{lax-monoidal}
In \cref{def:strict Markov functor} of a causal model $F:Free_L \rightarrow M$ , we have required the monoidal functor to be strict, i.e. for all variables $A,B \in L$, we require $F(A)\times F(B) = F(A\otimes B)$. In this case, all interventions are just products of single-variable interventions. However, in practical situations one may not always have such an alignment; consider the following example:

\begin{example}\label{example:strict monoidal}
Consider a deterministic causal model $F\in C_{\text{Stoch}}$ consisting of nodes $A,B,Y$ with edges $A\rightarrow Y, B\rightarrow Y$ and let $F(A)=F(B) = \mathbb{R}, F(A \otimes B) = \mathbb{R}^2$. Viewing $F(A),F(B), F(A\otimes B)$ as vector spaces, assume $F(A\otimes B)$ is the direct sum of two one-dimensional vector spaces representing two concepts of interest on which one wants to be able to intervene on, associated to nodes $A,B$, respectively. If these concepts are axis aligned, then we can simply model this causal model as a strict Markov functor, i.e. $F(A) \times F(B) = F(A \otimes B)$. However, if these two concepts are associated to orthogonal but not axis aligned subspaces, the coherence map $F(A)\times F(B)\rightarrow F(A\otimes B)$ is still an isomorphism but not the identity.
\end{example}
Hence, if the interventions of individual concepts are not just the product of the interventions on both concepts but still isomorphic to the product, we can model this by relaxing the assumption of a strict monoidal functor to a strong monoidal functor.

There may still be situations where this may be too strict; consider the following adaption of \cref{example:strict monoidal}:
\begin{example}\label{example:lax}
    We extend \cref{example:strict monoidal} by introducing a fourth node $C$ with outgoing edge $C \rightarrow Y$ and $F(A),F(B),F(C)=\mathbb{R},F(A\otimes B \otimes C) = \mathbb{R}^2$. Viewing these sets as vector spaces, the three 1-dimensional spaces $F(A),F(B),F(C)$ may correspond to three concepts on which one would like to be able to intervene, encoded in three linear directions in $\mathbb{R}^2$. These three linear directions cannot be orthogonal anymore. Modeling this as a Markov functor, the induced coherence map $F(A)\times F(B)\times F(C) \rightarrow F(A\otimes B \otimes C)$ is no longer an isomorphism.% -induced by the map $(v_a,v_b,v_c)\mapsto v_a+v_b+v_c$ - 
\end{example}
Hence, in the case where the product of concepts one would like to be able to intervene on is larger (or smaller) than the domain of the affected set of variables, we have to further relax the assumption of a strong monoidal functor to a lax monoidal functor.

\subsection{Effect-focused causal abstractions}
Consider a CBN over a DAG $L: A\rightarrow B \rightarrow C$. The distribution factorizes as
\[ p(a,b,c) = p(c|b) \cdot p(b|a) \cdot p(a)\]
Given is a coarsening of the domains of the random variables, i.e. maps $\tau_A,\tau_B,\tau_C$ mapping from the respective domains to clustered domains. We want to know in which cases the resulting distribution on the coarsened domains still factorizes as a CBN over $L$, i.e. whether
\begin{align}\label{eq:factorization}
p(\tilde{a},\tilde{b}, \tilde{c}) = p(\tilde{c}|\tilde{b}) \cdot p(\tilde{b}|\tilde{a}) \cdot p(\tilde{a})
\end{align}
\cref{eq:factorization} is equivalent to
\begin{align*}
&\int_{b\in \tau_B^{-1}(\tilde{b})} p(\tilde{c}|b) \cdot p(b|\tilde{a}) \cdot p(\tilde{a}) db = \\
&\int_{b\in \tau_B^{-1}(\tilde{b})} p(\tilde{c}|b) \cdot p(b|\tilde{b}) db \cdot \int_{b\in \tau_B^{-1}(\tilde{b})} p(b|\tilde{a}) \cdot p(\tilde{a})db 
\end{align*}

Generally (without demanding a dependence of the two mechanisms on each other), this is only the case if either $p(\tilde{b}|\tilde{a})$ factorizes as 
\[p(\tilde{b}|\tilde{a}) = p(b|\tilde{b}) \cdot p(\tilde{b}|\tilde{a})\]
or if $p(\tilde{c}|b)$ is constant over the cluster $\tau_B^{-1}(\tilde{b})$. In the former case, the domain of variable $B$ is partitioned respecting the shared effect of parent variable $A$, whereas in the latter case it is partitioned respecting the shared effect on child variable $C$. The latter is captured by \cref{def:abstractDefinition}. The example above motivates the term \textit{effect-based} causal abstraction.

In the following, we will show how the former can be captured by defining the natural transformation in \cref{def:abstractDefinition} in the reverse direction.
First note that when we restrict to $M$=Stoch, the deterministic morphisms constituting the natural transformation have right inverses $\epsilon$:
\begin{lemma}\label{retract}
   Consider a CBN $F_H\in C_{\text{Stoch}}$ that is a causal abstraction of a CBN $F_L\in C_{\text{Stoch}}$ witnessed by a deterministic natural transformation $\tau: F_L\iota \rightarrow F_H$.
   The component morphisms $\tau_{A}$ have right inverses $(\epsilon_{A}:F_H(A) \rightarrow F_L (\iota(A))$.
   \begin{proof}
Define for every node $A$ in $\mathbf{V^H}$: 
\[\forall a \in F_L(A): \epsilon_A(a| \tilde{a}) := p^{F_L}(a|\tilde{a}) \]
Then \[\forall \tilde{a}' \in F_H(A):((\tau_A \circ \epsilon_A) (\tilde{a}')|\tilde{a}) = \mathbf{1}\{\tilde{a}'=\tilde{a}\}\]
   \end{proof}
\end{lemma}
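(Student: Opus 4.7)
The goal is to construct, for each high-level node $A$, a stochastic kernel $\epsilon_A : F_H(A) \to F_L(\iota(A))$ satisfying $\tau_A \circ \epsilon_A = \mathrm{id}_{F_H(A)}$ in \textbf{Stoch}. Since $\tau_A$ is deterministic, for any kernel $\epsilon_A$ the composition evaluates as
\[
(\tau_A \circ \epsilon_A)(\tilde{a}' \mid \tilde{a}) \;=\; \int_{\{a\,:\,\tau_A(a)=\tilde{a}'\}} \epsilon_A(a \mid \tilde{a})\, da,
\]
so $\tau_A \circ \epsilon_A = \mathrm{id}$ is equivalent to asking that, for every $\tilde{a}$, the probability measure $\epsilon_A(\cdot \mid \tilde{a})$ be supported on the fiber $\tau_A^{-1}(\tilde{a})$ and sum to $1$ there. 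The plan is to take $\epsilon_A$ to be the Bayesian inversion of $\tau_A$ along the marginal $p^{F_L}(A)$, as the hint in the statement suggests.

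First I would verify that the marginal on the high level is the $\tau_A$-pushforward of the marginal on the low level. Instantiating the naturality square from \cref{def:abstractDefinition} at the object $A$ with no parents (equivalently, the morphism $I \to A$ in $\mathrm{restr}(Free_H)$) gives
\[
p^{F_H}(\tilde{a}) \;=\; \int \tau_A(\tilde{a}\mid a)\, p^{F_L}(a)\, da,
\]
and since $\tau_A$ is deterministic, this is precisely the pushforward $(\tau_A)_* p^{F_L}(A)$. In particular, the disintegration $p^{F_L}(a \mid \tilde{a})$ exists (under standard regularity on the underlying measurable spaces, e.g. standard Borel, which is the usual setting for \textbf{Stoch}) and defines a kernel $\epsilon_A(a \mid \tilde{a}) := p^{F_L}(a \mid \tilde{a})$.

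Next I would verify the support condition: because $\tau_A$ is deterministic, the joint $p^{F_L}(a)\,\tau_A(\tilde{a} \mid a)$ is concentrated on the graph $\{(a,\tilde{a}) : \tau_A(a) = \tilde{a}\}$, and by Bayes' rule the disintegration $p^{F_L}(a \mid \tilde{a})$ must also be concentrated on this set. Consequently, for $p^{F_H}(\tilde{a})$-almost every $\tilde{a}$, $\epsilon_A(\cdot \mid \tilde{a})$ is supported on $\tau_A^{-1}(\tilde{a})$. Plugging this into the composition formula above yields $(\tau_A \circ \epsilon_A)(\tilde{a}' \mid \tilde{a}) = \mathbf{1}\{\tilde{a}' = \tilde{a}\}$, which is exactly the identity kernel $\mathrm{id}_{F_H(A)}$ in \textbf{Stoch}.

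The only real subtlety is well-definedness of the conditional $p^{F_L}(a \mid \tilde{a})$ on $p^{F_H}$-null sets of $\tilde{a}$; one resolves this either by restricting attention to the support of $p^{F_H}$, by invoking existence of regular conditional distributions on standard Borel spaces, or (on discrete domains, which is the principal setting of interest and matches the informal notation in the lemma) by simply defining $\epsilon_A$ arbitrarily on values $\tilde{a}$ with $p^{F_H}(\tilde{a}) = 0$, since they do not affect the identity $\tau_A \circ \epsilon_A = \mathrm{id}$ as morphisms in \textbf{Stoch}. This is the one place where a measure-theoretic caveat should be stated explicitly, but it is a standard technicality rather than a real obstacle.
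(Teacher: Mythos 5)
Your proposal is correct and follows essentially the same route as the paper: the paper's proof likewise defines $\epsilon_A(a\mid\tilde{a}) := p^{F_L}(a\mid\tilde{a})$, i.e.\ the conditional of the low-level marginal given the cluster, and observes that composing with the deterministic $\tau_A$ returns the identity kernel because this conditional is supported on the fiber $\tau_A^{-1}(\tilde{a})$. The only difference is that you spell out the pushforward identity from the naturality square and the null-set/disintegration caveat, which the paper leaves implicit.
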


%A natural question to ask is what happens if we require $\epsilon$ to be a natural transformation instead of $\tau$.
We now let the morphisms $\epsilon$ constitute the natural transformation instead of $\tau$:

\begin{definition}\label{def:effect-based} 
    A causal model $F_H:Free_H \rightarrow \text{Stoch}$ is a \textbf{causal abstraction} of a causal model $F_L: Free_L \rightarrow \text{Stoch}$ if there exists a Markov functor $\iota:Free_H \rightarrow Free_L$ that embeds restr($Free_H$) into restr($Free_L$) and if there exists a natural transformation $\epsilon: F_H \Rightarrow F_L \iota$ whose components have deterministic left-inverses.
\end{definition}

We now discuss the implications of this definition, which will clarify the term  effect-focused causal abstraction:

Let causal model $F_H:Free_H \rightarrow M$ be an effect-focused causal abstraction of causal model $F_L: Free_L \rightarrow M$. Then the following diagram commutes for every high-level node $A \in \mathbf{V^H}$:

\begin{figure}[h]
\begin{tikzcd}[column sep=huge, row sep=large]
I \arrow[r, "p^{F_H}(A)"] \arrow[d, "\tau_I=id"'] & F_H(A) \arrow[d, "\epsilon_{A}"] \\
I \arrow[r, "p^{F_L}(A)"] & F_L(A)\
\end{tikzcd}
\label{fig:CommutingDiagramStrong}
\end{figure}

Then for every $a \in F_L(A)$:
\begin{align}
    p^{F_L}(a) &= \int_ {\Tilde{a}' \in F_H(A)} \epsilon_{A}(a|\tilde{a}') \cdot p^{F_H}(\tilde{a}')d\tilde{a}' \\
           &= \epsilon_{A}(a|\tilde{a}) \cdot p^{F_L}(\tilde{a})
\end{align}

If $p^{F_L}(\tilde{a})>0$, this implies
\begin{align}\label{eq:transition}
    \epsilon_{A}(a|\tilde{a}) =p^{F_L}(a|\tilde{a})
\end{align}

In other words, the transition probability from a cluster to its elements is just the probability conditioned on the cluster.

\begin{proposition}\label{prop:effect-focused}
      Let a CBN $F_H$ be an effect-focused abstraction of a causal model $F_L$ and consider a high-level node $A \in \mathbf{V^H}$, low-level values $b \in F_L(pa^H(A)), a \in F_L(A)$ such that $p^{F_L}(a),p^{F_L}(b)>0$. Then

\[p^{F_L}(a| \tilde{b}) = \epsilon_{A}(a|\tilde{a}) \cdot p^{F_L}(\tilde{a}|\tilde{b}) \]
\begin{proof}
    See appendix~\ref{proof_prop:effect-focused}.
\end{proof}
\end{proposition}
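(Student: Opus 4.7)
The strategy is to exploit naturality of $\epsilon$ at the generating morphism $f\colon pa^H(A) \to A$ of $Free_H$, together with the identity $\epsilon_A(a|\tilde{a}) = p^{F_L}(a|\tilde{a})$ established in \cref{eq:transition}. Naturality gives the commuting square
\begin{equation*}
\epsilon_A \circ F_H(f) = F_L(\iota(f)) \circ \epsilon_{pa^H(A)},
\end{equation*}
which, evaluated as an equality of Markov kernels at high-level input $\tilde{b}$ and low-level output $a$, reads
\begin{equation*}
\int \epsilon_A(a|\tilde{a}') \, p^{F_H}(\tilde{a}'|\tilde{b}) \, d\tilde{a}'
= \int p^{F_L}(a|b') \, \epsilon_{pa^H(A)}(b'|\tilde{b}) \, db'.
\end{equation*}

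Next I would collapse both sides. Since $\tau_A \circ \epsilon_A = \mathrm{id}$, the kernel $\epsilon_A(a|\tilde{a}')$ vanishes unless $\tilde{a}' = \tau_A(a) = \tilde{a}$, so the left integral reduces to $\epsilon_A(a|\tilde{a}) \cdot p^{F_H}(\tilde{a}|\tilde{b})$. On the right, applying \cref{eq:transition} at the parent object gives $\epsilon_{pa^H(A)}(b'|\tilde{b}) = p^{F_L}(b'|\tilde{b})$, and the right-hand side simplifies via the tower property to $\int p^{F_L}(a|b') \, p^{F_L}(b'|\tilde{b}) \, db' = p^{F_L}(a|\tilde{b})$. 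Together these yield the intermediate identity
\begin{equation*}
p^{F_L}(a|\tilde{b}) = \epsilon_A(a|\tilde{a}) \cdot p^{F_H}(\tilde{a}|\tilde{b}).
\end{equation*}

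Finally, to replace $p^{F_H}(\tilde{a}|\tilde{b})$ by $p^{F_L}(\tilde{a}|\tilde{b})$, I would integrate the previous identity over $a \in \tau_A^{-1}(\tilde{a})$. Since $\epsilon_A(\cdot|\tilde{a})$ is a probability kernel supported in $\tilde{a}$ (again using $\tau_A \circ \epsilon_A = \mathrm{id}$), the integral of $\epsilon_A(a|\tilde{a})$ over $a \in \tilde{a}$ equals one, so this yields $p^{F_L}(\tilde{a}|\tilde{b}) = p^{F_H}(\tilde{a}|\tilde{b})$. Substituting back gives the desired factorization $p^{F_L}(a|\tilde{b}) = \epsilon_A(a|\tilde{a}) \cdot p^{F_L}(\tilde{a}|\tilde{b})$.

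The main technical obstacle I anticipate is justifying the use of \cref{eq:transition} when $pa^H(A)$ is a nontrivial tensor product of nodes: this requires $\epsilon_{X \otimes Y} = \epsilon_X \otimes \epsilon_Y$, i.e.\ monoidality of the natural transformation $\epsilon$, which should follow from the deterministic left-inverse condition combined with how $\epsilon$ interacts with copy and discard, but verifying this monoidal compatibility is the subtlest step. A secondary subtlety is identifying $F_L(\iota(f))$ with the low-level conditional $p^{F_L}(a|b')$ rather than an interventional; this needs that $\iota(pa^H(A))$ captures the relevant parents of $\iota(A)$ in $L$ so that conditioning and intervening coincide along this edge.
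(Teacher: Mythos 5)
Your proposal is correct and follows essentially the same route as the paper's proof: both rest on the naturality square for the mechanism $pa^H(A) \to A$, the identity $\epsilon_{X}(x|\tilde{x}) = p^{F_L}(x|\tilde{x})$ from \cref{eq:transition}, and the collapse of the integral over $\tilde{a}'$ via the deterministic left-inverse property. Your final step --- obtaining $p^{F_H}(\tilde{a}|\tilde{b}) = p^{F_L}(\tilde{a}|\tilde{b})$ by marginalizing the intermediate identity over $a \in \tau_A^{-1}(\tilde{a})$ --- is a slightly more direct shortcut than the paper's, which reaches the same conclusion by inserting a normalizing integral and running the naturality square a second time; the two monoidality/conditioning subtleties you flag are real and are glossed over in the paper's version as well.
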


In other words, the map $\tau_{A}$ has to be a sufficient statistic for the distribution of $p^{F_L}(A|pa^H(A))$ parametrized by the clustered values of $F_H(pa^H(A))$.
Whereas \cref{def:simpleD} constrains $\tau$ to only cluster values that have the same effect on children variables, \cref{def:effect-based} constrains $\tau$ to only cluster values that are affected the same by parent variables. %While we belief that our definition 

%While Definition \cref{def:abstractDefinition} may 
\paragraph{Example:}
We now discuss an example, adapted from \cite{beckers2019abstractingcausalmodels}, where this alternative definition may be useful.
%In \cite{chalupka2015multilevelcauseeffectsystems}, 
Consider a voting scenario with 100 voters who can
either vote for or against a proposition.  The campaign for the
proposition can air some
subset of two advertisements to try to influence how the voters vote.   
The low-level model is
characterized by binary variables $A_1$,
$A_2$, and $X_i$, $i=1,\ldots,100$, . $X_i$ denotes voter $i$'s vote,
so $X_i=1$ if voter $i$ votes for the
proposition, and $X_i=0$ if voter $i$ votes against.  $A_i$ denotes
whether ad $i$ is run.

One would like to cluster the voters into groups that are equally affected by the ads. One way to do so, discussed in \cite{beckers2019abstractingcausalmodels},  is to cluster voters into groups for which the probability $p(x_i| a_1, a_2)$ coincides. However, we can further coarsen the partition of the voters by noting that two voters may have an initial bias towards the proposition independent of effect the ad (the cause) has on them, i.e. one may instead cluster voters into groups $\mathbf{X_c} = \{ X_{c_1},...,X_{c_{|X_c|}}\} \subset \mathbf{X}$ such that $\forall x_{c} :=(x_{c_1},...,x_{c_{|\mathbf{X_C}|}}) \in \mathbf{X_c}$:
\[p(x_{c}|a_1,a_2) =  p(x_{c}|\sum_{j=1}^{|\mathbf{X_c}|}x_{c_j}) \cdot p(\sum_{j=1}^{|\mathbf{X_c}|} x_{c_j}|a_1,a_2) \]

The causal mechanism can be captured without loss in a high-level model that does not model every voter but only groups of voters who are affected equally by the ads; within such a group of voters, only the sum of votes needs to be captured in the high-level domain.
This is the type of causal abstraction captured by \cref{def:effect-based}.

\section{Unifying Prior Perspectives}\label{Characterization}
We now demonstrate how our framework relates to and unifies several existing works on causal abstractions.

\subsection{Strong $\tau$ abstractions and $\tau$ constructive abstractions}

In their treatment of causal abstractions, \cite{beckers2019abstractingcausalmodels} differentiate between strong $\tau$-abstractions and constructive $\tau$-abstractions. A constructive $\tau$ abstraction can be seen as a deterministic causal abstraction in our framework with an additional context variable. Let a deterministic causal model $F_H \in C_{\text{Set}}$ be a causal abstraction (\cref{def:abstractDefinition}) of a deterministic causal model $F_L \in C_{\text{Set}}$ witnessed by a natural transformation $\tau$. Since $\tau$ is a natural transformation between strict monoidal functors, it has to preserve the monoidal structure, i.e. 
\begin{align}\label{eq:factorizationTau}
    \forall A,B \in \mathbf{V^H}: \tau_{A} \times \tau_{B} = \tau_{A \otimes B}
\end{align}
In other words, the map $\tau: F_L(\mathbf{V^H}) \rightarrow F_H(\mathbf{V_H})$ factorizes as $\tau = (\tau_1,...,\tau_{|V_H|})$. \cite{beckers2019abstractingcausalmodels} call such maps constructive.
A strong $\tau$-abstraction is a relaxation of a constructive $\tau$-abstraction where there may not be an alignment between clusters of low-level variables and high-level nodes, i.e. $\tau$ may not factorize as in \cref{eq:factorizationTau}. A special case of a strong $\tau$-abstraction are the constructions given in section \cref{lax-monoidal}, i.e. when $F_L$ may only be a strong or even lax monoidal functor. In \cite{beckers2019abstractingcausalmodels}, the authors conjecture that under a few minor technical conditions, every strong $\tau$-construction is also a constructive $\tau$-abstraction. The examples of \cref{lax-monoidal} serve as an example of strong $\tau$-abstractions that are not constructive $\tau$-abstractions
\subsection{Causal abstractions in mechanistic interpretability}

\cite{geiger2025causalabstractiontheoreticalfoundation} unify several concepts of mechanistic interpretability methods in the language of causal abstractions. Their definition of constructive abstractions coincides with our concept of a deterministic causal abstraction.

Our discussion of non-aligned interventional sets in section \cref{lax-monoidal} relates to the concepts of the linear representation hypothesis and superposition in the context of mechanistic interpretability. The linear representation hypothesis postulates that concepts in the activation space of neural networks are encoded as linear directions; superposition implies that these linear subspaces are not simply the orthogonal axes induced by the neurons. Hence, intervening on a single concept is not possible by just intervening on a single neuron; this is captured by examples of type \cref{example:strict monoidal}.

Superposition further implies that the number of concepts can be larger than the number of dimensions, and hence the concepts cannot be stored in orthogonal directions. The goal of sparse autoencoders can then be viewed as the goal of finding a deterministic causal abstraction $F_{\text{sparse}}\in M_{Set}$ of a neural network $F$ such that $F_{\text{sparse}}$ is a strict monoidal functor, while the neural network $F$ is only a lax monoidal functor; analogous to \cref{example:lax}.

%\subsubsection*{Sub-circuit analysis}
In mechanistic interpretability one is further interested in whether a network implements a certain algorithm or task.  View a neural network as a deterministic causal model $F\in C_{Set}$ with parentless nodes corresponding to the input and childless nodes corresponding to the output. Then the objects in the coslice category $F/C_{Set}$ are the subnetworks of $F$. On the other hand, for some algorithm $A \in C_{Set}$, $F$ implements the algorithm $A$ if $F$ is an object in the slice category $C_{Set}/A$. Assume one is interested in how the network $F$ performs a certain task given by a set of input-output pairs. This set of input-output pairs can be seen as a two-node causal model: $A:X_{\text{input}}\rightarrow X_{\text{output}}$ and is causal abstraction of the neural network $F$. One is then interested in those submodels of the network that already implement the task, i.e. in the morphisms $F\rightarrow F'$ in $C/A$. In mechanistic interpretability, these morphisms are the objects of interest when finding subcircuits.

\subsection{Cluster-DAGs}
\cite{CDAGS} consider abstractions between causal models with unobserved confounders, encoded in acyclic directed mixed graphs (AMDGs). ADGMs can have bidirected edges representing unobserved confounders.

\begin{definition}\label{def:ADMGs}
    A causal model $F_{L'}\in C_M$ over a DAG $L'=(\mathbf{V^{L'}},E^{L'})$ is a \textbf{causal model over an ADMG} $L=(\mathbf{V^L},E^L)$ if the nodes of $L'$  can be divided $\mathbf{V^{L'}}=\mathbf{V^L} \sqcup \mathbf{U}$ into endogeneous nodes $\mathbf{V^L}$ and exogeneous nodes $\mathbf{U}$ such that $L$ is the latent projection of $L'$.
    
     An $ADMG$ $H$ is a \textbf{graphical abstraction between ADMGs} of ADMG $L$ if there are DAGs $L',H'$ whose latent projections are $L, H$, respectively, and such that $H'$ is a graphical abstraction of $L'$.  
   % An ADMG $H'$ is a graphical abstraction between ADMGs of $G'$ if there exist DAGs $G,H$ such that $G'$ is the latent projection of $H$ 
\end{definition}

\begin{lemma}\label{lemma:factorization over ADMGs}
     Let $F_{L'}\in C_M$ be a causal model over an ADMG $L=(\mathbf{V^L},E^L)$ and let $H=(\mathbf{V^H},E^H)$ be a graphical abstraction between ADMGs of $L$. Then restricted to a subcategory of $Free_{L'}$, $F_{L'}$ is a causal model over the ADMG $H$.
    
\begin{proof}
     Consider the DAGs $L'',H''$ associated with the graphical abstraction between ADGMs $L,H$; w.l.o.g. we can choose $L''=L'$ and $H''$ such that there exists a bidirected edge $A \leftrightarrow B$ in $H$ iff there exists a node $U$ with outgoing edges $U\rightarrow A, B$ in $H''$. Since $H''$ is a graphical abstraction of $L''$, by \cref{prop:embedding} there exists a functor $\iota$ such that $F_{L'} \circ \iota:Free_{H''}\rightarrow M$ is a causal abstraction of $F_{L'}$. %Then the result follows by $\cref{thm:MainResult}$. 
\end{proof}
\end{lemma}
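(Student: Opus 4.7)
The plan is to unfold the definition of graphical abstraction between ADMGs and then reduce to the DAG case already handled by \cref{prop:embedding}. By \cref{def:ADMGs}, witnessing that $H$ is a graphical abstraction between ADMGs of $L$ requires producing DAGs $L'',H''$ with latent projections $L,H$ such that $H''$ is a graphical abstraction of $L''$. Since $F_{L'}$ is already given as a causal model over the DAG $L'$ whose latent projection is $L$, I would first argue that we may take $L'' = L'$ without loss of generality: any two DAGs with the same latent projection can be related by a choice of exogenous nodes, and the choice does not affect which high-level graphs can be recovered.

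Next, I would pin down a canonical $H''$. The key point is to choose $H''$ so that each bidirected edge $A \leftrightarrow B$ in $H$ arises from precisely one exogenous node $U_{AB}$ with outgoing edges $U_{AB} \to A$ and $U_{AB} \to B$, and no other outgoing edges. With this choice, the latent projection of $H''$ is literally $H$. One must then verify that $H''$ can indeed be obtained from $L'$ by applying the two graphical operations of \cref{def:operationsGraphicalAbstractions}: this uses the fact that $H$ is a graphical abstraction of $L$ at the ADMG level (so the endogenous structure is compatible) together with the definition's freedom to realize confounders as deletable/mergeable exogenous nodes in the DAGs that project to $L$ and $H$.

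Once $H''$ is in hand, I would invoke \cref{prop:embedding} to obtain a Markov functor $\iota : Free_{H''} \to Free_{L'}$ embedding $\text{restr}(Free_{H''})$ into $\text{restr}(Free_{L'})$. Composing, $F_{L'} \circ \iota : Free_{H''} \to M$ is a Markov functor, hence a causal model over the DAG $H''$ in the sense of \cref{def:strict Markov functor}. Since the latent projection of $H''$ is $H$, this composite is, by \cref{def:ADMGs}, a causal model over the ADMG $H$. The statement of the lemma is then realized by taking the subcategory of $Free_{L'}$ to be the image $\iota(Free_{H''})$, on which $F_{L'}$ restricts to (an isomorphic copy of) $F_{L'} \circ \iota$.

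The main obstacle I expect is the compatibility step in paragraph two: carefully checking that, given the freedom of representatives for the latent projections, one can always arrange $L'' = L'$ and pick $H''$ with the desired one-confounder-per-bidirected-edge form so that $H''$ is genuinely a graphical abstraction of $L'$ in the DAG sense. This amounts to showing that the node-deletion and node-merging operations at the DAG level refine appropriately when some nodes are exogenous, and that bidirected structure in $H$ is faithfully witnessed by exogenous nodes inherited from $L'$. Once this bookkeeping is settled, the rest is a direct application of \cref{prop:embedding} and the definitions.
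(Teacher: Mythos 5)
Your proposal follows essentially the same route as the paper's proof: take $L''=L'$ without loss of generality, choose $H''$ with exactly one exogenous confounder node witnessing each bidirected edge of $H$, and apply \cref{prop:embedding} to obtain the embedding functor $\iota$ whose composite with $F_{L'}$ is the desired causal model over $H$. The compatibility bookkeeping you flag as the main obstacle is precisely what the paper compresses into its ``w.l.o.g.'' clause, so your added detail there is a faithful (and somewhat more careful) elaboration rather than a different argument.
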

\cref{lemma:factorization over ADMGs} and \cref{thm:MainResult} lead to the following corollary, which subsumes Theorem 2 and Theorem 5 in \cite{CDAGS}, who proved the following corollary in the case of clustering low-level variables which is a special case of graphical abstractions as we defined them (\cref{def:abstractDefinition}).

\begin{corollary}
    Consider a CBN with distribution $p$ that factorizes over the ADMG $L$, and let $H$ be a graphical abstraction between ADMGs of $L$. Then restricted to high-level nodes $\mathbf{V^H}$, all interventional distributions factorize over the ADMG $H$, i.e. for all $A \in \mathbf{V^H}$:
     \begin{align*} & p(\mathbf{V^H}\setminus\{A\}| do(A)) =  \\
                  &                  \int_U p(U) \cdot \left (\prod_{C \in \mathbf{V^H} \setminus \{A\}} p(C| pa_{H}(C),U_C) \right)dU
    \end{align*}
    such that $U_{C}\cap U_{C'} \neq \emptyset$ if and only if there is a bidirected edge $C \leftrightarrow C'$ in $H$.
\end{corollary}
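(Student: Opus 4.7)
The plan is to lift the ADMGs to their associated DAGs that make the unobserved confounders explicit, apply \cref{lemma:factorization over ADMGs} at the DAG level, and then marginalize the confounders to recover the ADMG-level factorization.

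First, by \cref{def:ADMGs}, I would fix a DAG $L'=(\mathbf{V^L}\sqcup\mathbf{U},E^{L'})$ whose latent projection is $L$, and a DAG $H'$ whose latent projection is $H$, such that $H'$ is a graphical abstraction of $L'$. Following the construction in the proof of \cref{lemma:factorization over ADMGs}, I would further arrange $H'$ so that bidirected edges in $H$ correspond bijectively to common confounders: $C\leftrightarrow C'$ appears in $H$ iff there is some $U\in\mathbf{U}$ with edges $U\to C$ and $U\to C'$ in $H'$. The CBN $p$ then corresponds to a causal model $F_{L'}:Free_{L'}\to\text{Stoch}$, and by \cref{lemma:factorization over ADMGs} there is a Markov functor $\iota:Free_{H'}\to Free_{L'}$ embedding restr($Free_{H'}$) into restr($Free_{L'}$) such that $F_{L'}\circ\iota$ is a causal abstraction of $F_{L'}$ in the sense of \cref{def:abstractDefinition}.

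Second, I would apply \cref{thm:MainResult} to conclude that the interventional distribution associated with $do(A)$ computed on the high-level DAG $H'$ agrees with the corresponding interventional distribution on $L'$. By \cref{prop:restrictedMarkov} applied to $H'$, this interventional distribution equals $\int_{\mathbf{U}} p(\mathbf{U})\, \prod_{C\in\mathbf{V^H}\setminus\{A\}} p(C\mid pa^{H'}(C))\, d\mathbf{U}$. Writing $U_C:=pa^{H'}(C)\cap\mathbf{U}$, we have $pa^{H'}(C)=pa^H(C)\cup U_C$, and the construction of $H'$ guarantees $U_C\cap U_{C'}\neq\emptyset$ iff $C\leftrightarrow C'$ in $H$, which yields the stated factorization once one integrates out the nodes in $\mathbf{U}\setminus\bigcup_C U_C$ that do not appear as parents.

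The main obstacle is ensuring the simultaneous consistency of the choices of $L'$ and $H'$, so that the same exogenous variables $\mathbf{U}$ play the role of confounders both on the low and the high level; only under such a consistent choice does the collection $\{U_C\}$ cleanly encode the bidirected structure of $H$. Once this is secured via the flexible construction in the proof of \cref{lemma:factorization over ADMGs}, the remainder reduces to invoking naturality of $\tau$ along the interventional morphism in restr($Free_{H'}$) with signature $A\to \mathbf{V^H}\setminus\{A\}$, which is precisely the content of \cref{thm:MainResult}, followed by a routine marginalization.
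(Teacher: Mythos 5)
Your proposal is correct and follows essentially the same route the paper intends: the paper offers no explicit proof beyond noting that the corollary follows from \cref{lemma:factorization over ADMGs} and \cref{thm:MainResult}, and you invoke exactly those two results (plus \cref{prop:restrictedMarkov} for the explicit factorized form) in the same order. Your added care about choosing $L'$ and $H'$ consistently so that the exogenous variables $U_C$ encode the bidirected edges of $H$ is a useful elaboration of a step the paper leaves implicit, not a deviation from its argument.
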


Given a CBN over an ADMG $L=(\mathbf{V^L},E^L)$, \cite{CDAGS} further show how applying the rules of Pearl's do-calculus on the high-level graph induced by a partition over $V$ gives valid results on the low-level graph $L$. We prove a generalized statement for all graphical abstractions between ADMGs:

\begin{theorem}\label{thm:do-calculus}

Consider a CBN $F$ over an ADMG $L=(\mathbf{V^L},E^L)$ and let $H=(\mathbf{V^H}, E^H)$ be a graphical abstraction between ADMGs of $L$.  Then applying Pearl's do-calculus on $H$ gives valid results on the low-level graph $L$, i.e.
for any disjoint subsets of clusters ${X}, {Y}, \*{Z}, \*{W} \subseteq \mathbf{V^H}$, the following three rules hold: 
$$
\begin{aligned}
    &\textbf{Rule 1:} \ p^F(Y|do(X), Z, W) = p^F(Y | do(X), W) \\
    & 
     \qquad \qquad \qquad
     \text{if } (\*{Y}\perp \! \!\! \perp \*{Z} | \*{X}, \*{W})_{H_{{\overline{\*X}}}}  \\ 
\end{aligned}
$$

$$
\begin{aligned}
    & \textbf{Rule 2:} \ p^F(Y | do(X), do(Z), W) = p^F(Y | do(X), Z, W) \\
     & 
     \qquad \qquad \qquad
     \text{if } (Y \perp \! \!\! \perp \*{Z} | \*{X}, \*{W})_{H_{{\overline{\*{X}} \underline{\*{Z}}}}} \\
    & \textbf{Rule 3:} \ p^F(Y | do(X), do(Z), W) = p^F(Y | do(X), W) \\
    & 
    \qquad \qquad \qquad
    \text{if } (\*Y \perp \! \!\! \perp \*Z | \*X, \*W)_{H_{{\overline{\*X}\overline{\*Z_H(\*W)}}}}\\
\end{aligned}
$$
where $H_{{\overline{\*X}\underline{\*Z}}}$ is obtained from $H$ by removing the edges %incoming to $\*X$ and outgoing from $\*Z$,
into $\*X$ and out of $\*Z$,
and $\*Z_H(\*W)$ is the set of nodes in $Z$ that are non-ancestors of any node of $\*W$ in $H$. %$G_{\*{C}_{\overline{\*X}}}$.
\begin{proof}
See appendix~\ref{proof_do_calculus}.
\end{proof}
\end{theorem}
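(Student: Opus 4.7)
The plan is to reduce \cref{thm:do-calculus} to Pearl's classical do-calculus theorem for ADMGs by establishing that the high-level marginal distribution of $F$ is itself a CBN over the ADMG $H$. Once this reduction is in place, each of Rules 1--3 becomes a direct instance of the corresponding standard rule applied to $H$ and then translated back to the low-level notation.

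First, I would invoke \cref{lemma:factorization over ADMGs} to obtain DAGs $L'$ and $H''$ with latent projections $L$ and $H$, together with a Markov functor $\iota: Free_{H''} \to Free_{L'}$ such that $F_{L'} \circ \iota$ is a causal model over the ADMG $H$. By \cref{thm:MainResult} applied at the level of the DAGs $L'$ and $H''$, the pushforward of $p^F$ along the cluster maps $\tau_A$, $A\in\mathbf{V^H}$, factorises over $H''$, and marginalising the exogenous $U$ yields the standard semi-Markov factorisation over the ADMG $H$ of every interventional distribution $p^F(\mathbf{V^H}\setminus X \mid do(X))$, with confounders $U_C$ shared along precisely the bidirected edges of $H$. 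This is the multi-variable generalisation of the Corollary preceding \cref{thm:do-calculus}.

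Second, with this semi-Markov factorisation over $H$ in hand, Pearl's original do-calculus theorem for ADMGs applies verbatim to the high-level distribution. Each of the three rules is a purely graphical consequence of the ADMG factorisation together with d-separation on the mutilated graphs $H_{\overline{X}}$, $H_{\overline{X}\underline{Z}}$, and $H_{\overline{X}\,\overline{Z_H(W)}}$. Translating back via the cluster maps $\tau_A$, each identity that holds for the high-level distribution transfers to the corresponding identity for $p^F(Y\mid do(X), Z, W)$, whose arguments are cluster-valued events of the low-level model.

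The main obstacle I anticipate is upgrading the single-variable Corollary to the multi-variable interventions required by do-calculus, since the rules involve $do(X)$ and $do(X), do(Z)$ for arbitrary subsets $X, Z \subseteq \mathbf{V^H}$. I would address this by returning directly to the functor $\iota$: every string diagram in restr($Free_{H''}$) corresponds, by \cref{prop:restrictedMarkov}, to a specific multi-variable interventional distribution on the high-level graph, and $\iota$ embeds all of these into restr($Free_{L'}$) simultaneously. Hence for any $X \subseteq \mathbf{V^H}$ the diagram associated with $p^{F_H}(\mathbf{V^H}\setminus X\mid do(X))$ has a unique image recovering exactly the truncated factorisation over $H''$, which after marginalising $U$ becomes the semi-Markov factorisation over the ADMG $H$. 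This categorical packaging avoids a separate inductive argument across subsets $X$, after which the classical ADMG do-calculus proof can be cited directly.
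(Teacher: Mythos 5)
Your proposal is correct in outline, but it runs in the opposite direction from the paper's proof. The paper never forms a high-level causal model to which do-calculus is applied; instead, for each rule it transfers the \emph{d-separation hypothesis} downward: it lifts the independence from the mutilated ADMG $H_{\overline{X}}$ to the DAG $H'_{\overline{X}}$ with explicit confounders, uses the categorical d-separation criterion (Proposition~28 of \citet{fritz2023dseparationcriterioncategoricalprobability}) to read the independence off the string diagram, pushes the disconnectedness of wires through the monoidal functor $\iota: Free_{H'_{\overline{X}}} \to Free_{L'_{\overline{X}}}$, reads off d-separation in $L_{\overline{X}}$, and then applies the corresponding rule of do-calculus \emph{on the low-level graph} $L$, where the conclusion is immediately a statement about $p^F$. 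You instead transfer the \emph{factorization} upward, establish that the cluster-level interventional distributions form a semi-Markovian CBN over $H$, apply classical do-calculus on $H$, and translate back via the embedding of restr($Free_{H''}$) into restr($Free_{L'}$). Both routes are sound; yours lets you cite Pearl's theorem as a black box on $H$ but requires the multi-variable upgrade of the Corollary (which you correctly identify and handle via $\iota$ and \cref{prop:restrictedMarkov}), whereas the paper's route avoids that upgrade entirely at the cost of an explicit graphical/diagrammatic argument for each rule, plus the extra bookkeeping for Rule~3 that $\*Z_H(\*W) \subseteq \*Z_L(\*W)$. One small caveat on your write-up: invoking \cref{thm:MainResult} and the maps $\tau_A$ is a slight misdirection here, since this theorem involves no domain coarsening --- the relevant facts are \cref{lemma:factorization over ADMGs} together with \cref{prop:restrictedMarkov}, with the components of the natural transformation effectively identities.
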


\begin{remark}
    This generalizes Theorem 3 by \cite{CDAGS}, as they prove the statement for partitions of the low-level variables which is subsumed by our definition of graphical abstractions (\cref{def:operationsGraphicalAbstractions}).
\end{remark}
%This subsumes 

\subsection{$\Phi$-Abstractions}
The work by \cite{otsuka2022} is the closest to our proposed framework. Given two CBNs $F_L, F_H$ - viewed as elements in $Stoch^{Free_L}, Stoch^{Free_H}$ - they call $F_H$ a \textbf{$\phi$-abstraction} of $F_L$ if there exists a graph homomorphism $\phi: L\rightarrow H$ such that there exists a natural transformation $\alpha: F_L \Rightarrow F_H \Phi$, where $\Phi$ is the functor $\Phi : Free_L \rightarrow Free_H$ induced by $\phi$. 
One issue with this definition is that node clusterings cannot be straightforwardly modeled as $\Phi$-abstractions:

\begin{example}\label{ex:ABCgraph}
    Consider a CBN $F_L\in C_{\text{Stoch}}$ over a DAG $L:A \rightarrow B \rightarrow C$ and let $F_H$ be the CBN over a single node $ABC$ with the same distribution as the joint distribution of $F_L$, i.e. for $(a,b,c) \in F_L(A\otimes B\otimes C)(=F_H(ABC)): p^{F_L}(a,b,c) = p^{F_H}(a,b,c)$. While there exists a unique graph homomorphism $L \rightarrow H$, there is no straightforward way to define a natural transformation $F_L \rightarrow F_H \Phi$, as this would require to define deterministic maps $\tau_A:F_L(A)\rightarrow F_H(A \otimes B\otimes C), \tau_B:F_L(B)\rightarrow F_H(A \otimes B \otimes C), \tau_C:F_L(C)\rightarrow F_H(A \otimes B \otimes C)$ such that $\tau_A =  \tau_B \circ p^{F_L}(B|A)$, $\tau_B =  \tau_C \circ p^{F_L}(C|B)$
\end{example}

Further, the requirement of a graph homomorphism may be too restrictive. Consider the same CBN as in \cref{ex:ABCgraph} and let $F_H$ be the CBN over graph $H:A \rightarrow C$ induced by $F_L$, i.e. $p^{F_H}(A) = p^{F_L}(A),p^{F_H}(C|A) = p^{F_L}(C|A)$. Since there is no corresponding graph homomorphism $L \rightarrow H$, this cannot be framed as a $\phi$-abstraction.
In comparison, we restricted to graphical abstraction, which by \cref{prop:embedding} together with \cref{prop:restrictedMarkov} is the most general set of graphs that are consistent under interventions.%. that can see as graphical abstractions which are interventionally consistent.

In addition, their framework does not recover existing notions of causal abstractions, as there is no equivalent of \cref{thm:MainResult}.

\subsection{Neural Causal Abstractions}
\cite{xia2024neuralcausalabstractions} discuss causal abstractions of probabilistic causal models over the three layers of the Pearl Causal Hierarchy (PCH). Our categorical \cref{def:abstractDefinition} is more general in nature on the interventional layer, as their definition of $\tau$-consistency coincides with our definition in the restricted case of $M=\text{Stoch}$.

They further differentiate between intervariable clustering, i.e. clustering of nodes, and intravariable clustering, i.e. clustering of individual variable domains. In our framework of \cref{def:abstractDefinition}, intravariable clustering is encoded in the morphisms defining the natural transformation, whereas intervariable clustering is encoded in the functor $\iota$.

\section{Conclusion}

We have introduced a novel categorical framework that can recover several useful notions of causal abstractions. We prove that our result is a sufficient and necessary condition for unifying causal abstractions within a categorical framework. We have shown the effectiveness of this definition by being able to give concise string diagrammatic proofs of existing results. We also theoretically show how several previous works can be easily encapsulated by our proposed framework.

One interesting future direction is to explore situations where allowed interventions do not coincide with individual variable domains and abstractions between such causal models in more depth. For example, one question would be whether the \textit{exact transformations} introduced in \cite{rubenstein2017causalconsistencystructuralequation} can fit into our categorical framework. Further, one may try to prove \cref{thm:do-calculus} for general Markov categories. Including cyclic causal models within our framework is also an interesting future direction.

\bibliographystyle{plainnat}
\bibliography{Biblio2}

\clearpage
\appendix
\thispagestyle{empty}

% Supplementary material: To improve readability, you must use a single-column format for the supplementary material.
\onecolumn
%\aistatstitle{Instructions for Paper Submissions to AISTATS 2026: \\
%Supplementary Materials}
\aistatstitle{Supplementary Materials}

%Note: You can choose whether the include you appendices as part of the main submission file (here) OR submit them separately as part of the supplementary material. It is the authors' responsibility that any supplementary material does not conflict in content with the main paper (e.g., the separately uploaded additional material is not an updated version of the one appended to the manuscript).

%\section{FORMATTING INSTRUCTIONS}

%The appendices follow the same formatting instructions as in the main paper.
%The only difference is that the supplementary material must be in a \emph{single-column} format.

%Note that reviewers are under no obligation to examine your supplementary material.

\section{MISSING PROOFS}
%\subsection{Proof of \cref{prop:restrictedMarkov}}
%Consider an interventional distribution $p^G(B|do(A))$,
%In restr($Free_L$), there is only one way to stack the generating morphisms to obtain a morphism of the signature $A \rightarrow B$

%where in $Free_L$ we view $X\subset \mathbf{V^L}$ as the tensor product over all nodes in $X$.

\subsection{Proof of \cref{prop:embedding}}\label{proofEmbedding}
\begin{proof}
Sufficiency: By \cref{def:operationsGraphicalAbstractions}, the statement follows after proving it for the cases that $H$ arises after either one of the two operations in \cref{def:operationsGraphicalAbstractions}. In both cases, $\iota$ is defined by mapping the generators of $Free_H$ to their natural counterparts in $Free_L$ following the respective operations on the graph:
\begin{itemize}
    \item Assume $H$ results from merging two nodes $A, B$: To construct $\iota$, it suffices to map the generators of $Free_H$ to $Free_L$. On the side of generating objects (i.e. the nodes), let $\iota$ map the merged node $(A,B)$ to $A\otimes B$, and let $\iota$ be the identity on all other nodes. For all morphisms except those  where $(A,B)$ appears in the incoming or outgoing wires, let $\iota$ be the identity. The remaining boxes have unique counterparts in the string diagram in $Free_L$ describing the full factorization. Let $\iota$ map the boxes to these counterparts, e.g. let 
$\iota\left(\begin{tikzpicture}[baseline,
  every node/.style={font=\small},
  wire/.style={line width=0.9pt},
  box/.style={draw,rounded corners,minimum width=8mm,minimum height=5mm,inner sep=2pt},
  dot/.style={circle,fill,inner sep=1.2pt}
]

  % Inputs at bottom
  \node (A) at (0.5,-1) {$pa^(A\otimes B)$};

  % Boxes
  \node[box] (f) at (0.5, 0) {};

  % Outputs
  \node (Cout) at (0.5,1) {$A\otimes B$};

  % Wires
  \draw[wire] (A) -- (f.south);                          % A -> f
  \draw[wire] (f.north) -- (Cout.south);                     % split -> g
               % g -> D

  % Optional type annotations
  \node[left=1mm of f] {};

\end{tikzpicture}\right)= 
\begin{tikzpicture}[baseline,
  every node/.style={font=\small},
  wire/.style={line width=0.9pt},
  box/.style={draw,rounded corners,minimum width=8mm,minimum height=5mm,inner sep=2pt},
  dot/.style={circle,fill,inner sep=1.2pt}
]

  % Inputs at bottom
  \node (A) at (0,-1) {$X$};
  \node (B) at (2,-1) {$Y$};
  \node (cap) at (1,-1) {$Z$};

  % Boxes
  \node[box] (f) at (0,0.1) {};
  \node[box] (g) at (2,1.3) {};

  % Outputs
  \node (Cout) at (0,2) {$A$};
  \node (Dout) at (2,2) {$B$};

  % Split point for C
  \coordinate (split) at (0,0.6);
  \coordinate (split2) at (1,-0.4);

  % Wires
 % \draw[wire] (cap) -- (f.south);
 % \draw[wire] (cap) -- (g.south);
  \draw[wire] (A) -- (f.south);                          % A -> f
  \draw[wire] (f.north) -- (split);                      % f -> split
  \draw[wire] (cap) -- (split2);   
  \node[dot] at (split) {};   
  \node[dot] at (split2) {}; % splitting node
  \draw[wire] (split2) to[out=0,in=-90] (f.south);
  \draw[wire] (split2) to[out=0,in=-90] (g.260);
  \draw[wire] (split) -- (Cout.south);                   % split -> output C
  \draw[wire] (split) to[out=4,in=-90] (g.250);       % split -> g
  \draw[wire] (B) to[out=90,in=-90] (g.south);      % B -> g
  \draw[wire] (g.north) -- (Dout.south);                 % g -> D

\end{tikzpicture}$

where $X:=pa^L(A)\setminus (pa^L(A) \cap pa^L(B)), Y:= pa^L(B) \setminus ( (pa^L(A) \cap pa^L(B)) \cup A), Z:=(pa^L(A) \cap pa^L(B))$ and w.l.o.g. $A\leq B$ in the topological order on the nodes $\mathbf{V^L}$ (we can choose any topological order). The wire from $A$ to $B$ only exists if $A$ is a parent of $B$ in $L$.

   Since this is the only way to stack the generating morphisms in $Free_L$ without using any of them more than once to obtain a morphism of the same signature, the full subcategory of restr($Free_L$) without objects $A, B$ is restr($Free_H)$ and hence, restr($Free_H$) embeds into restr($Free_L$).

    \item Assume $H$ results after removing a node $A$: Again, to construct $\iota$, it suffices to map the generators of $Free_H$ to $Free_L$. On generating objects, this is the identity. On all generating morphisms/ boxes coming from edges untouched by the removing operation, it is also the identity. As before, the remaining morphisms/ boxes have unique counterparts in the string diagram in $Free_L$ describing the full factorization.  Since this is the only way to stack the generating morphisms in $Free_L$ without using any of them more than once to obtain a morphism of the same signature, the full subcategory of restr($Free_L$) without objects that are tensor products including $X$ is restr($Free_H$).

\end{itemize}

Necessity: We first show that if $H$ is the graph that results after deleting a confounder in $L$, then rest($Free_H$) does not embed into restr($Free_L$). In this case there exist three nodes $A,B,U$ such that there are paths $A \rightarrow A, U\rightarrow B$ in $L$. If there is no directed path $A \rightarrow B$ or $B \rightarrow A$, then the string diagram of the (unique) morphsim $I \rightarrow A \otimes B$ is disconnected in $Free_H$ but connected in $Free_L$ and hence restr($Free_H$) does not embed into restr($Free_L$). Now assume there is a directed path between $A$ and $B$, w.l.o.g. $A \rightarrow B$. Assume an embedding $\iota$ would exist. The unique string diagrams with signature $A\rightarrow B$ and $I \rightarrow B$ in restr($Free_H$) are of the form   
$
\begin{tikzpicture}[baseline,
  every node/.style={font=\small},
  wire/.style={line width=0.9pt},
  box/.style={draw,rounded corners,minimum width=8mm,minimum height=5mm,inner sep=2pt},
  dot/.style={circle,fill,inner sep=1.2pt}
]

  % Inputs at bottom
  \node (A) at (0.5,-1) {$A$};

  % Boxes
  \node[box] (f) at (0.5, 0) {};

  % Outputs
  \node (Cout) at (0.5,1) {$B$};

  % Wires
  \draw[wire] (A) -- (f.south);                          % A -> f
  \draw[wire] (f.north) -- (Cout.south);                     % split -> g
               % g -> D

  % Optional type annotations
  \node[left=1mm of f] {};

\end{tikzpicture},
\begin{tikzpicture}[baseline,
  every node/.style={font=\small},
  wire/.style={line width=0.9pt},
  box/.style={draw,rounded corners,minimum width=8mm,minimum height=5mm,inner sep=2pt},
  dot/.style={circle,fill,inner sep=1.2pt}
]
  % Inputs at bottom
  \node[box] (A) at (0.5,-1){};
  % Boxes
  \node[box] (f) at (0.5, 0) {};
  \node (X) at (0,0) {};

  % Outputs
 % \node (A) at (0.5, 0.2) {$A$};
  \node (Cout) at (0.5,1) {$B$};
  % Wires
  \draw[wire] (A) -- (f.south) node[midway, right] {$A$};                          % A -> f
  \draw[wire] (f.north) -- (Cout.south);                     % split -> g
               % g -> 
  % Optional type annotations
  \node[left=1mm of f] {};
\end{tikzpicture}
$

respectively, whereas the unique string diagrams in restr($Free_L$) of the same signature are 
$
\begin{tikzpicture}[baseline,
  every node/.style={font=\small},
  wire/.style={line width=0.9pt},
  box/.style={draw,rounded corners,minimum width=8mm,minimum height=5mm,inner sep=2pt},
  dot/.style={circle,fill,inner sep=1.2pt}
]

  % Inputs at bottom
  \node (A) at (0,-1) {$A$};
  \node[box] (cap) at (1,-1) {};

  % Boxes
  \node[box] (f) at (0.5,0) {};
 % \node (X) at (3,1) {};

  % Outputs
  \node (Cout) at (0.5,1) {$B$};
  \draw[wire] (A)  to[out=90, in=-90] (f.250);  
  \draw[wire] (cap.north) to[out=90, in=-90] (f.290);  % A -> f
  \node (UU) at (1.2,-0.5) {$U$};
  \draw[wire] (f.north) -- (Cout);    

\end{tikzpicture},
\begin{tikzpicture}[baseline,
  every node/.style={font=\small},
  wire/.style={line width=0.9pt},
  box/.style={draw,rounded corners,minimum width=8mm,minimum height=5mm,inner sep=2pt},
  dot/.style={circle,fill,inner sep=1.2pt}
]

  % Inputs at bottom
  \node[box] (A) at (0,-0.7) {};
  \node[box] (cap) at (1.2,-2) {};
  
  % Boxes
  \node (X) at (-2,0) {};
  \node[box] (f) at (0.5,0.5) {};

  \coordinate (split) at (1.2,-1.5);
  \node[dot] at (split) {};   

  % Outputs
  \node (B) at (0.5,1.5) {$B$};
  \draw[wire] (A) to[out=90, in=-90] (f.250);  
  \draw[wire] (split) to[out=180, in=-90] (A.south);  
  \draw[wire] (cap.north) to[out=90, in=-90] (f.290);  % A -> f
  \node (UU) at (1.3,-1.1) {$U$};
  \node (AA) at (-0.1,-0.2) {$A$};
  \draw[wire] (f.north) -- (B);    
\end{tikzpicture}
$

respectively. Hence restr($Free_H$) cannot be a full subcategory of restr($Free_L$). 

Back to the general case, assume $\iota:Free_H \rightarrow Free_L$ exhibits $Free_H$ as a full subcategory of $Free_L$. The nodes appearing in the tensor product that any node in $H$ is mapped to are clusters of nodes in $L$. The nodes of $L$ that do not appear in any of these images are the removed nodes. Since removal and merging operations commute, we can first perform all merging operations, then merge all nodes that will be removed, and finally remove of that node. Now by our previous considerations, this only leads to a full subcategory if the final node to be removed is not a confounder.

\end{proof}

\subsection{Proof of \cref{prop:effect-focused}}
\label{proof_prop:effect-focused}
\begin{proof}
The following diagram has to commute:  
  \[
\begin{tikzcd}[column sep=huge, row sep=large] \label{CommutingDiagram2}
F_L(pa^H(A)) \arrow[r, "p^{F_L}(A|pa^H(A))"] & F_L(A) \\
F_H(pa^H(A)) \arrow[r, "p^{F_H}(A|pa^H(A))"] \arrow[u, "\epsilon_{pa^H(A)}"] & F_H(A)  \arrow[u, "\epsilon_{A}"]\
\end{tikzcd}
\]
Therefore,
\begin{align}
p^{F_L}(a|\tilde{b}) &= \int_{b\in \tau^{-1}_{pa^H(A)}(\tilde{b})} p^{F_L}(a|b) \cdot p^{F_L}(b|\tilde{b})db \\
                     &= \int_{b\in \tau^{-1}_{pa^H(A)})\tilde{b}} p^{F_L}(a|b) \cdot \epsilon_{pa^H(A)}(b|\tilde{b}) db \label{eq:pos1}\\
                     &=  \epsilon_{A}(a| \tilde{a}) \cdot p^{F_H}(\tilde{a}|\tilde{b}) \label{eq:comm1}\\
                     &= p^{F_L}(a|\tilde{a}) \cdot p^{F_H}(\tilde{a}|\tilde{b}) \label{eq:pos2}\\
                     &=  p^{F_L}(a|\tilde{a}) \cdot \int_{a \in \tau_A^{-1}(\tilde{a})} p^{F_L}(a|\tilde{a}) \cdot p^{F_H}(\tilde{a}|\tilde{b})da \\
                     &=  p^{F_L}(a|\tilde{a}) \cdot \int_{a \in \tau_A^{-1}(\tilde{a})} \epsilon_A(a|\tilde{a}) \cdot p^{F_H}(\tilde{a}|\tilde{b})da \label{eq:pos3}\\
                     &= p^{F_L}(a|\tilde{a}) \cdot \int_{b \in \tau_{pa^H(A)}^{-1}(\tilde{b})}p^{F_L}(\tilde{a}|b) \cdot \epsilon_{pa^H(A)}(b|\tilde{b})db  \label{eq:comm2}\\
                     &= p^{F_L}(a|\tilde{a}) \cdot \int_{b \in \tau_{pa^H(A)}^{-1}(\tilde{b})}p^{F_L}(\tilde{a}|b) \cdot p^{F_L}(b|\tilde{b})db  \label{eq:pos4}\\
                     & = p^{F_L}(a|\tilde{a}) \cdot p^{F_L}(\tilde{a}|\tilde{b})  \\
\end{align}
where we used \cref{CommutingDiagram2} in \cref{eq:comm1} and \cref{eq:comm2} and where we used \cref{eq:transition} in \cref{eq:pos1}, \cref{eq:pos2}, \cref{eq:pos3}, and \cref{eq:pos4}.
\end{proof}

\subsection{Proof of \cref{thm:do-calculus}}
\label{proof_do_calculus}
\begin{proof}
Let $L', H'$ be the DAGs associated to $L,H$ according to \cref{def:ADMGs}, respectively, that include unobserved confounders. 
\begin{itemize}
    \item \textbf{Rule 1}: Let $ (\*{Y}\perp \! \!\! \perp \*{Z} | \*{X}, \*{W})_{H_{\overline{\*X}}}$. Then also 
     $ (Y\perp \! \!\! \perp \*{Z} | \*{X}, \*{W})_{H'_{\overline{\*X}}}$. Since $H'$ is a graphical abstraction of $L'$, $H'_{\overline{X}}$ is also a graphical abstraction of $L'_{\overline{X}}$ and there exists a monoidal functor $\iota: Free_{H'_{\overline{X}}} \rightarrow Free_{L'_{\overline{X}}}$. 
    By Proposition 28 in \cite{fritz2023dseparationcriterioncategoricalprobability}, in the string diagram associated to the factorization of $Y,Z,X,W$ in $Free_{H'_{\overline{X}}}$, the wires corresponding to $Y, Z$ are not connected after removing the wire corresponding to $X,W$. By monoidality of $\iota$, the same holds in $Free_{L'_{\overline{X}}}$. Applying again Proposition 28 in \cite{fritz2023dseparationcriterioncategoricalprobability}, $X,Z$ are d-separated given $X,W$ in $L'_{\overline{X}}$ and therefore also in $L_{\overline{X}}$. Then applying the first rule of do-calculus on $L$, the statement follows.
    \item \textbf{Rule 2}: The proof of rule 2 is analogous to the proof of rule 1.
    %\item  \textbf{Rule 2}: The proof is analogous to the proof of Rule 1.
    %\item \textbf{Rule 3}: 
    \item \textbf{Rule 3}: Let $(\*Y \perp \! \!\! \perp \*Z | \*X, \*W)_{H_{{\overline{\*X}\overline{\*Z_H(\*W)}}}}$. Then also  $(\*Y \perp \! \!\! \perp \*Z | \*X, \*W)_{H'_{{\overline{\*X}\overline{\*Z_H(\*W)}}}}$. Since $H'$ is a graphical abstraction of $L'$, $H'_{\overline{X}}$ is also a graphical abstraction of $L'_{\overline{X}}$ and there exists a monoidal functor $\iota: Free_{{H'_{{\overline{\*X}\overline{\*Z_H(\*W)}}}}} \rightarrow Free_{{L'_{{\overline{\*X}\overline{\*Z_H(\*W)}}}}}$. 
    By Proposition 28 in \cite{fritz2023dseparationcriterioncategoricalprobability}, in the string diagram associated to the factorization of $Y,Z,X,W$ in $Free_{{H'_{{\overline{\*X}\overline{\*Z_H(\*W)}}}}}$, the wires corresponding to $Y, Z$ are not connected after removing the wire corresponding to $X,W$. By monoidality of $\iota$, the same holds in $Free_{L'_{{\overline{\*X}\overline{\*Z_H(\*W)}}}}$. Since $Z_H(W) \subset Z_L(W)$ (which is easy to check), this is also true for $Free_{L'_{{\overline{\*X}\overline{\*Z_L(\*W)}}}}$. Applying again Proposition 28 in \cite{fritz2023dseparationcriterioncategoricalprobability}, $X,Z$ are d-separated given $X,W$ in $L'_{{\overline{\*X}\overline{\*Z_L(\*W)}}}$ and therefore also in $L_{{\overline{\*X}\overline{\*Z_L(\*W)}}}$. Then applying the third rule of do-calculus on $L$, the statement follows.
\end{itemize}
\end{proof}

%The supplementary materials may contain detailed proofs of the results that are missing in the main paper.

%\subsection{Proof of Lemma 3}

%\textit{In this section, we present the detailed proof of Lemma 3 and then [ ... ]}

%\section{ADDITIONAL EXPERIMENTS}

%If you have additional experimental results, you may include them in the supplementary materials.

%\subsection{Effect of the Regularization Parameter}

%\textit{Our algorithm depends on the regularization parameter $\lambda$. Figure 1 below illustrates the effect of this parameter on the performance of our algorithm. As we can see, [ ... ]}

\end{document}